\DeclareMathOperator*{\argmin}{arg\,min}
\theoremstyle{definition}
\theoremstyle{remark}
\theoremstyle{remark}
\newtheorem{assumption}{Assumption}
\theoremstyle{remark}
\newtheorem{lemma}{Lemma}
\newtheorem{theorem}{Theorem}
\theoremstyle{remark}
\newcommand{\RNum}[1]{\uppercase\expandafter{\romannumeral #1\relax}}
\newif\if@restonecol
\begin{document}
\captionsetup{font={footnotesize}}
\captionsetup[figure]{labelfont={rm},labelformat={default},labelsep=period,name={Figure}}
\title{Communication-Efficient Federated Learning with Binary Neural Networks}

\author{
        Yuzhi~Yang,~\IEEEmembership{Student Member,~IEEE},~
        Zhaoyang~Zhang,~\IEEEmembership{Senior Member,~IEEE},\\
        and Qianqian~Yang,~\IEEEmembership{Member,~IEEE}
\thanks{Y.~Yang, Z.~Zhang (Corresponding Author) and Q.~Yang are with the College of Information Science and Electronic Engineering, Zhejiang University, Hangzhou 310007, China, and with the International Joint Innovation Center, Zhejiang University, Haining 314400, China, and also with Zhejiang Provincial Key Lab of Information Processing, Communication and Networking (IPCAN), Hangzhou 310007, China. (e-mails: \{yuzhi\_yang, ning\_ming, qianqianyang20\}@zju.edu.cn)}
\thanks{This work was supported in part by National Key R\&D Program of China under Grant 2020YFB1807101 and 2018YFB1801104, and National Natural Science Foundation of China under Grant U20A20158, 61725104 and 61631003.}
}

\date{}
\maketitle
\begin{abstract}
Federated learning (FL) is a privacy-preserving machine learning setting that enables many devices to jointly train a shared global model without the need to reveal their data to a central server. However, FL involves a frequent exchange of the parameters between all the clients and the server that coordinates the training. This introduces extensive communication overhead, which can be a major bottleneck in FL with limited communication links. In this paper, we consider training the binary neural networks (BNN) in the FL setting instead of the typical real-valued neural networks to fulfill the stringent delay and efficiency requirement in wireless edge networks. We introduce a novel FL framework of training BNN, where the clients only upload the binary parameters to the server. We also propose a novel parameter updating scheme  based on the Maximum Likelihood (ML) estimation that preserves the performance of the BNN even without the availability of aggregated real-valued auxiliary parameters that are usually needed during the training of the BNN. Moreover, for the first time in the literature, we theoretically derive the conditions under which the training of BNN is converging. { Numerical results show that the proposed FL framework significantly reduces the communication cost compared to the conventional neural networks with typical real-valued parameters, and the performance loss incurred by the binarization can be further compensated by a hybrid method.}
\end{abstract}

\begin{IEEEkeywords}
Federated learning, binary neural networks (BNN), Maximum Likelyhood (ML) estimation, distributed learning.
\end{IEEEkeywords}

\section{Introduction}
Federated learning (FL) enables machine learning models to be trained in a distributed way\cite{FL}\cite{FL2}. In an FL setting, a federation of edge devices, such as Internet of Things devices, mobile phones, etc., trains a shared global model collaboratively under the coordination of a central server while keeping all the training data locally at the edges. This alleviates the growing privacy concern of sharing the data and reduces the communication cost caused by sending extensive data to a central node.

A typical federated learning system with a star network topology, illustrated in Fig. \ref{fig::FL1}, consists of a central server and many clients, where each client holds a local dataset to train a shared global model in a coordinated manner iteratively. At each training iteration, each client trains the model with its local dataset and then uploads the updated parameters to the server. The central server combines all the uploaded parameters from all the servers by algorithms such as Federated Averaging (FA)\cite{FL}, which simply takes average of the received parameters from different clients, and then broadcasts the combined parameters to all the clients. A client will then update the local model with the received parameters, which completes one iteration of training.

\begin{figure} [!htp]
\vspace{-0.15 cm}
    \centering
       \includegraphics[width=0.8\linewidth]{./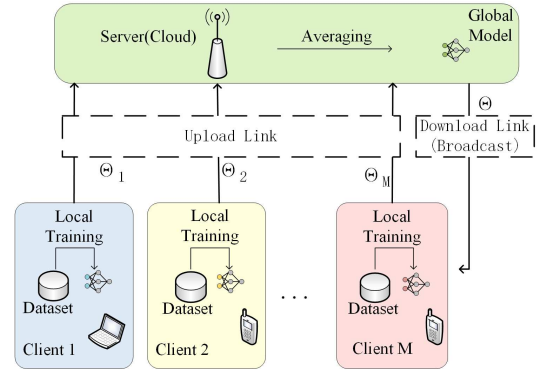}
	  \caption{A typical federated learning system with federated averaging algorithm\cite{FL}.}
	  \label{fig::FL1}
\vspace{-0.3 cm}
\end{figure}

Constantly updating the model parameters, which could be tens of millions for a deep neural network like ResNet\cite{resnet}, causes a massive communication overload. This could significantly slow down the convergence of FL as the communication links between the devices and the central server is usually with limited bandwidth and power\cite{FLP}. Therefore, many research efforts have been made to theoretically characterize the fundamental trade-offs between the performance of the trained models and the communication costs of the FL framework\cite{Zhang13}\cite{Braverman16}\cite{Han18}\cite{Acharya20}\cite{Barnes19}.
Meanwhile, from a more practical point of view, other efforts have been made to reduce the total communication cost.  Most of them focus on the uplink communication and attempt to either compress the parameters to be uploaded or reduce the frequency of uploading actions\cite{FLStrategies}\cite{Suresh17}\cite{Konecny18}\cite{Alistarh17}\cite{Horvath19}. Another direction of reducing the uplink communication overhead is to compress the gradient before sending to the server. Many algorithms have been proposed to this end, e.g., gradient sparsification\cite{Aji17}, gradient quantization\cite{Courbariaux15}\cite{TWN} and sparse ternary compression\cite{Sattler19}. Regarding the download communication, some existing methods also compressing the broadcast message from the server to the clients\cite{Akhaled19}.

These works have remarkably eased the communication burden of the FL systems, however, at the cost of degrading the model performance. This has motivated researchers to consider the models inherently with low precision parameters for the sake of less communication cost to upload these parameters and preserve the model performance at the same time \cite{FLStrategies}. The first attempt is made by J. Xu et al., who have proposed Federated Trained Ternary Quantization(FTTQ) in \cite{TC}, where ternary quantization is applied on all parameters of the model. Inspired by this work, we investigate the federated learning of the binary neural networks (BNN),  a type of neural network with only binary parameters, in this paper.

BNN was first proposed by I. Hubara et al. in 2016\cite{hubara_binarized_2016}, where they show the BNN achieves nearly the same performance as the corresponding neural networks with real-valued parameters, which has brought a lot of research attention since. During the training of a BNN, a set of auxiliary real-valued parameters is used to keep the accumulation of gradients and update the binary parameters accordingly. A trivial way of training BNN in an FL setting is for each client to upload the auxiliary real-valued parameters to the server to aggregate and then feedback the aggregated parameters to the clients. However, this results in the exact communication cost of training a real-valued neural network, which is against the initial motivation of employing BNN. To reduce the communication cost, it is preferable to send only the binary parameters to the server, which may degrade the performance of the BNN as the auxiliary real-valued parameters may not be correctly updated at each iteration of training. In this paper, we propose a novel federated learning algorithm of training the BNNs,  referred to as \textit{BiFL-BiML},  that only uploads the binary parameters to the server, and employs a novel maximum likelihood-based parameter updating approach, referred to as \textit{ML-PU}, to estimate the aggregated auxiliary parameters with the local auxiliary parameters and the received aggregation of binary parameters from all the clients. The main contributions of this paper are summarized as follows:

\begin{itemize}
  \item We propose a novel federated learning framework of training BNN, which uploads only binary values instead of 32-bits float points to the server such that the communication cost of the FL system is drastically reduced. A novel maximum likelihood-based approach is proposed to estimate the aggregated auxiliary parameters and update the local auxiliary parameters accordingly, which preserves the performance of the BNN training ever when only the binary parameters are aggregated during the training.
  \item To the best of our knowledge, we are the first to theoretically investigate the convergence of a general centralized BNN and derive a set of conditions under which BNNs are guaranteed to converge towards the optimum. We further extend the convergence analysis to the BNN under FL settings considered in this paper.
  \item Numerical results validate the effectiveness of the proposed method, which show it
  reduces the upload communication cost from 32 bits to 1 bit per parameters when comparing with the conventional FL implementation that sends the real-valued parameters in the uplink communication, while the performance loss incurred by the binarization can be further compensated by a hybrid method. Moreover, the proposed BiFL-BiML remarkably outperforms the other communication efficient FL implementations that upload the quantized parameters, including FTTQ\cite{TC} and other BiFL algorithms which do not utilize ML-PU to update the local auxiliary parameters, demonstrating the effectiveness of ML-PU.
\end{itemize}

This paper is organized as follows. In Section \ref{sec::pre}, we present the preliminaries about the federated learning and the BNN. We present the proposed federated learning framework of training BNN in Section \ref{sec::alg}, followed by the theoretical analysis on the convergence of the proposed method in Section \ref{sec::the}. Section \ref{sec::num} provides the numerical results and Section \ref{sec::con} concludes the paper.

\section{Preliminary}\label{sec::pre}

In this section, we first present the necessary preliminaries about federated learning for this work and then introduce the original version of BNN\cite{hubara_binarized_2016} and the BNN training method we employ in this work.

\subsection{Federated Learning}

We show an example of federated systems in Fig. \ref{fig::FL1}, which consists of several clients connected to a central server. The training data is distributed among all the clients either in an IID or a non-IID manner. A typical federated learning algorithm operates iteratively with four phases, i.e., the \textit{local training} phase , the \textit{parameter uploading} phase, the \textit{parameter aggregating} phase and \textit{parameter downloading} phase.

During the \textit{local training} phase, each client trains a common model shared by all the clients with its local data and by a particular neural network training algorithm. The \textit{parameter uploading} phase happens after the \textit{local training} phase when all the clients transmit the updated parameters to the central server. The communication cost occurring during this phase can be prohibitive as every client in the system needs to send a whole set of all parameters of the model to the server. After receiving all the local parameters from the clients, the server aggregates parameters in a certain way to generate one set of parameters of the model during the \textit{parameter aggregating} phase, which will then be broadcast to all the clients in the \textit{parameter downloading} phase. One widely used method to aggregate parameters is Federated Averaging (FA) algorithm\cite{FL}, which we use as a baseline to compare the novel parameter aggregating method proposed in this paper. We will explain the process of the FA algorithm in details in the sequel. After the \textit{parameter downloading} phase, each client then initiates the model with the received parameters from the server and starts a new iteration of local training.

Here, we show the details of the Federated Averaging algorithm. Without loss of generality, assume that the considered federated learning system is to solve a typical image classification problem with a labeled dataset $\mathcal{D}$. Note that we overlook the details of communication protocol as it is out of the scope of this paper and assume that all sent models can be received successfully.
Each client caches a non-overlapping subset of the dataset $\mathcal{D}$, denoted by  $\mathcal{D}_{i}$, where $i= 1, ..., M$, and $M$ is the number of clients in the system. $\mathcal{D}_{i}$ contains $|\mathcal{D}_{i}|$ image-label pairs denoted as $(\mathbf{x}_{ij},y_{ij})$, where $j = 1,...,|\mathcal{D}_{i}|$, $\mathbf{x}_{ij}$ represents the $j$th image at the $i$th client, and $y_{ij}$ denotes the corresponding label. We emphasize that the subsets of dataset at the clients are non-overlaping, i.e., $\mathcal{D}_{i} \cap \mathcal{D}_{j}=\emptyset$, $\forall  i \neq j$, and hence, $|\mathcal{D}|=\sum_{i=1}^{M}|\mathcal{D}_{i}|$. We denote the local parameters of the common model at client $i$ as $\bm{W}^{i}$, $i = 1,...,M$. The FA algorithm aggregates all the local parameters uploaded by the clients as follows,
\begin{equation}
\tilde{\bm{W}} = \frac{1}{|\mathcal{D}|}\sum_{i=1}^{M}|\mathcal{D}_{i}|\bm{W}^{i},
\end{equation}
where the derived set of parameters is the weighted sum of $\bm{W}^{i}$, $i = 1,...,M$ with the sizes of local datasets as the weights.

Without loss of generality, assume that all the clients employ the same loss function, given as $\ell(y, g(\bm{x},\bm{W}))$, where $y$ is the true label of the image $\bm{x}$, and  $g(\bm{x},\bm{W})$ denotes the predicted label by the model specified by the set of parameters $\bm{W}$. Hence we have the average loss of the local dataset at a certain client $i$, denoted by $F_i(\bm{W})$, as follows:
\begin{equation}
F_i(\bm{W}) = \frac{1}{|\mathcal{D}_{i}|}\sum_{j = 1}^{|\mathcal{D}_{i}|}\ell(y_{ij},g(\mathbf{x}_{ij},\bm{W})),
\end{equation}
and we then define the global loss as follows:
\begin{equation}
\begin{aligned}\label{BNN}
F_g(\bm{W}) &= \frac{1}{|\mathcal{D}|}\sum_{i=1}^{M}\sum_{j = 1}^{|\mathcal{D}_{i}|}\ell(y_{ij},g(\mathbf{x}_{ij},\bm{W}))\\
&=\sum_{i=1}^{M}\frac{|\mathcal{D}_{i}|}{|\mathcal{D}|}F_{i}(\bm{W}),
\end{aligned}
\end{equation}

The communication between the server and the clients takes place during the \textit{parameter uploading} phases and the \textit{parameter downloading} phases. The communication cost of the latter can be neglected as the transmission of the global parameters to all the clients can be done in a broadcast manner. Whereas in the \textit{parameter uploading} phase, each client has to unicast a different set of parameters to the server, which may introduce prohibitive communication costs. Hence in this paper, we focus on the minimization of the communication cost during this phase.

\subsection{Binary Neural Networks}\label{sec::bnn}

Binary neural network is a type of neural network with highly compressed parameters first introduced in \cite{hubara_binarized_2016}, where the neural network weights are binary, requiring much less storage space compared to the classical neural networks.

Consider a $L$-layer binary neural network, where the weights and activations are constrained to $+1$ or $-1$. We denote the weights (including trainable parameters of the activation) of each layer by $\bm{W}^b_l \in \{1, -1\}^*$, where $l =1, ..., L$ and $*$ represents the dimension of $\bm{W}^b_l$. We further introduce a scalar parameter $\vartheta_l$ for each layer acting as the trainable amplitude for this layer, which has been shown to improve the performance of BNN in \cite{XNOR}. The output of layer $l$ is given as follows:
\begin{equation}
\mathbf{a}_l = \vartheta_l f_l(\bm{W}^b_l, \mathbf{a}_{l-1})\label{layer_output}
\end{equation}
where $\mathbf{a}_{l-1}$ is output of layer $l-1$, and $f_l(\cdot)$ denotes the operation of layer $l$ on the input.

Here we explain how to train a BNN with the stochastic gradient descent (SGD) algorithm. Denote the learning rate as $\eta$. We introduce $\overline{\bm{W}}$ as the auxiliary real-valued parameters of the same size as $\bm{W}^{b}$. We employ the SGD algorithm to update the network parameters as well as $\bm{\vartheta}$ iteratively. The corresponding parameters at each iteration are specified by the subscript $t$, where $t= 1, ..., T$ are the total iterations.
At training iteration $t$,
\begin{subequations}\label{gradient}
\begin{align}
\overline{\bm{W}}_{t} \leftarrow \overline{\bm{W}}_{t-1} + \eta\frac{\partial \ell(\bm{W}^{b}_{t-1},\bm{\vartheta}_{t-1},\mathcal{D})}{\partial\bm{W}_{t-1}^{b}} \\
\bm{\vartheta}_{t} \leftarrow \bm{\vartheta}_{t-1} + \eta\frac{\partial \ell(\bm{W}^{b}_{t-1},\bm{\vartheta}_{t-1},\mathcal{D})}{\partial \bm{\vartheta}_{t-1}}
\end{align}
\end{subequations}
where $\mathcal{D}$ denotes the training data used in this iteration. $\ell(\cdot)$ is the loss function, the value of which depends on network parameters $\bm{W}^{b}$, $\bm{\vartheta}$ and the current training batch $\mathcal{D}^t$. Hence, the second terms on the right hand side of \eqref{gradient} correspond to the gradients on $\bm{W}^{b}$ and $\bm{\vartheta}$, respectively. Then, $\bm{W}^{b}$ is being updated at the end of iteration $t$ as follows:

\begin{equation}
\bm{W}_{t}^{b} \leftarrow \text{Sign}(\overline{\bm{W}}_{t}),
\end{equation}
where $\text{Sign}(\cdot)$ is an element-wise operation that returns $+1$ if the element is positive and $-1$ otherwise.

We further restrict the values of the auxiliary parameters $\overline{\bm{W}}_{t}$ within a given bound such that these parameters would not grow overly large whereas the corresponding binary parameters remain the same. Hence, at the end of the iteration, we have
\cite{hubara_binarized_2016}
\begin{equation}
\overline{\bm{W}}_{t} \leftarrow \text{clip}(\overline{\bm{W}}_{t}, -1, 1)
\end{equation}
where $\text{clip}(\cdot, 1, -1)$ is an element-wise operation that returns $1$ for any element larger than $1$, $-1$ for any element smaller than $-1$, or the same value for any element in between.

We emphasize that the auxiliary real-valued parameters $\overline{\bm{W}}$, are only used during the training to enable iterative optimization of the binary parameters of the BNN as shown above. During inference time, the output of each layer of the BNN is derived by the binary parameters $\bm{W}^{b}$ and the scalar parameter $\bm{\vartheta}$ as given by \eqref{layer_output}.

\section{Federated learning with binary neural networks}\label{sec::alg}

We consider a federated learning system where each client stores and trains a binary neural network of an identical structure to others' with local data, referred to as \textit{BiFL}. In this section, we first present different ways of implementing BiFL. Then for the communication efficient implementation, we propose a novel parameter updating approach that preserves the performance of the BNN as the communication-heavy counterparts.

\subsection{The Implementation of BiFL}\label{sec::FLBNN}
\begin{algorithm}[ht]
\caption{\textbf{BiFL algorithm}}
\label{alg::FL}

\textbf{Initialize:} All the clients load a neural network of the same architecture locally, and initialize the parameters of this neural network with the same parameters; each client informs the server the size of local dataset, i.e., $|\mathcal{D}_{i}|$, $i=1, ..., M$.\\
\For{$t = 1$ to $T$}
{
\{Start a global iteration\}\\
\{1. The \textit{local training} phase\}\\
\For{all clients $i = 1$ to $M$}{
\{All clients train a BNN simultaneously\}\\
Calculate the loss of BNN: $\ell(\bm{W}^{b,i}_{t-1},\bm{\vartheta}^i_{t-1},\mathcal{D}_i)$\\
Update local parameters by \eqref{gradient}\\
Binarization: $\bm{W}^{b,i}_{t}\leftarrow\overline{\bm{W}}^i_{t}$\\
}

\{2.The \textit{parameter uploading} phase\}\\
Each client upload $\overline{\bm{W}}^i_{t}$ or $\bm{W}^{b,i}_{t}$ to the server, $i = 1, ..., M$.\\
Each client also uploads $\bm{\vartheta}^i_{t}$ to the server, $i = 1, ..., M$.\\
\{3.The \textit{parameter aggregating} phase\}\\
The server obtains the global parameters by applying FA algorithm, i.e.,  $\bm{W}_t = \frac{1}{|\mathcal{D}|}\sum_{i=1}^{M}|\mathcal{D}_{i}|\overline{\bm{W}}^i_{t}$ or $\tilde{\bm{W}} = \frac{1}{|\mathcal{D}|}\sum_{i=1}^{M}|\mathcal{D}_{i}|\overline{\bm{W}}^{b,i}_{t}$\\
The server also obtains the global scalar parameters by applying FA algorithm, i.e., $\bm{\vartheta}_t = \frac{1}{|\mathcal{D}|}\sum_{i=1}^{M}|\mathcal{D}_{i}|\bm{\vartheta}^{b,i}_{t}$.\\
\{4.The \textit{parameter downloading} phase\}\\
Each client downloads $\bm{W}_t$ or $\tilde{\bm{W}}$ from the server, and use the local update method to update $\overline{\bm{W}}^i_{t}$.\\
Each client also updates the scalar parameters: $\bm{\vartheta}^i_t\leftarrow\bm{\vartheta}_t$.\\
Binarization: $\bm{W}^{b,i}_{t}\leftarrow\overline{\bm{W}}^i_{t}$\\

}

\end{algorithm}
We elaborate in this section how to train a binary neural network in a federated learning setting. The detailed implementation of BiFL is shown in Algorithm \ref{alg::FL}. Following the process of federated learning, each client first initializes the auxiliary real-valued parameters of the local binary neural network, denoted by $\overline{\bm{W}}$, and the scalar parameters for all the layers $\bm{\vartheta}^i_{0}$  where $i$ is the index of the client, $i=1, ..., M$. And then the initial $\bm{W}_{0}^{b}$ is derived by applying the Sign function on $\overline{\bm{W}_{0}}$, i.e.,
\begin{equation}
\bm{W}_{0}^{b, i} \leftarrow \text{Sign}(\overline{\bm{W}}_{0}).
\end{equation}

After the initialization, the system starts iterative training. During the \textit{local training} phase of iteration $t$, $\forall t \in [1:T]$, each client derives the updated parameters $\overline{\bm{W}}_{t}^i$, $\bm{\vartheta}^i_{t}$ and $\bm{W}_{t}^{b, i}$ following the training process described in Section \ref{sec::bnn} using its local dataset. After the local training, each client attempts to further update the local parameters collaboratively with others clients. One way is to obtain the globally aggregated data from the server by sending the local weights. The aggregated scalar parameters, denoted by $\bm{\vartheta}_{t}$, could be obtained by following the same process of traditional federated learning. That is, each client uploads $\bm{\vartheta}^i_{t}$, $i=1, ..., M$ during the \textit{parameter uploading} phase, and then the server aggregates these parameters utilizing the FA algorithm, i.e.,
\[\tilde{\bm{\vartheta}}_{t}=\frac{1}{|\mathcal{D}|}\sum_{i=1}^{M}|\mathcal{D}_{i}|\bm{\vartheta}^i_{t},\]
where we recall that $\mathcal{D}_{i}$, $i=1, ..., M$, are the disjoint local datasets at users, and $\mathcal{D}$ is the union of all the local datasets. Then the server broadcasts $\tilde{\bm{\vartheta}}_{t}$ and each client replaces $\bm{\vartheta}^i_{t}$ with $\tilde{\bm{\vartheta}}_{t}$, i.e., $\bm{\vartheta}^i_{t} \leftarrow \tilde{\bm{\vartheta}}_{t}$, $i= 1, ..., M$.

We present in the following different methods of further updating $\bm{W}_{t}^{b, i}$ and $\overline{\bm{W}}_{t}^i$ using global information after local training at each iteration of federated learning and illustrate the difference among them in Fig. \ref{fig::FLBNN}.

\begin{figure*} [!htp]
\vspace{-0.15 cm}
    \centering
       \includegraphics[width=0.9\linewidth]{./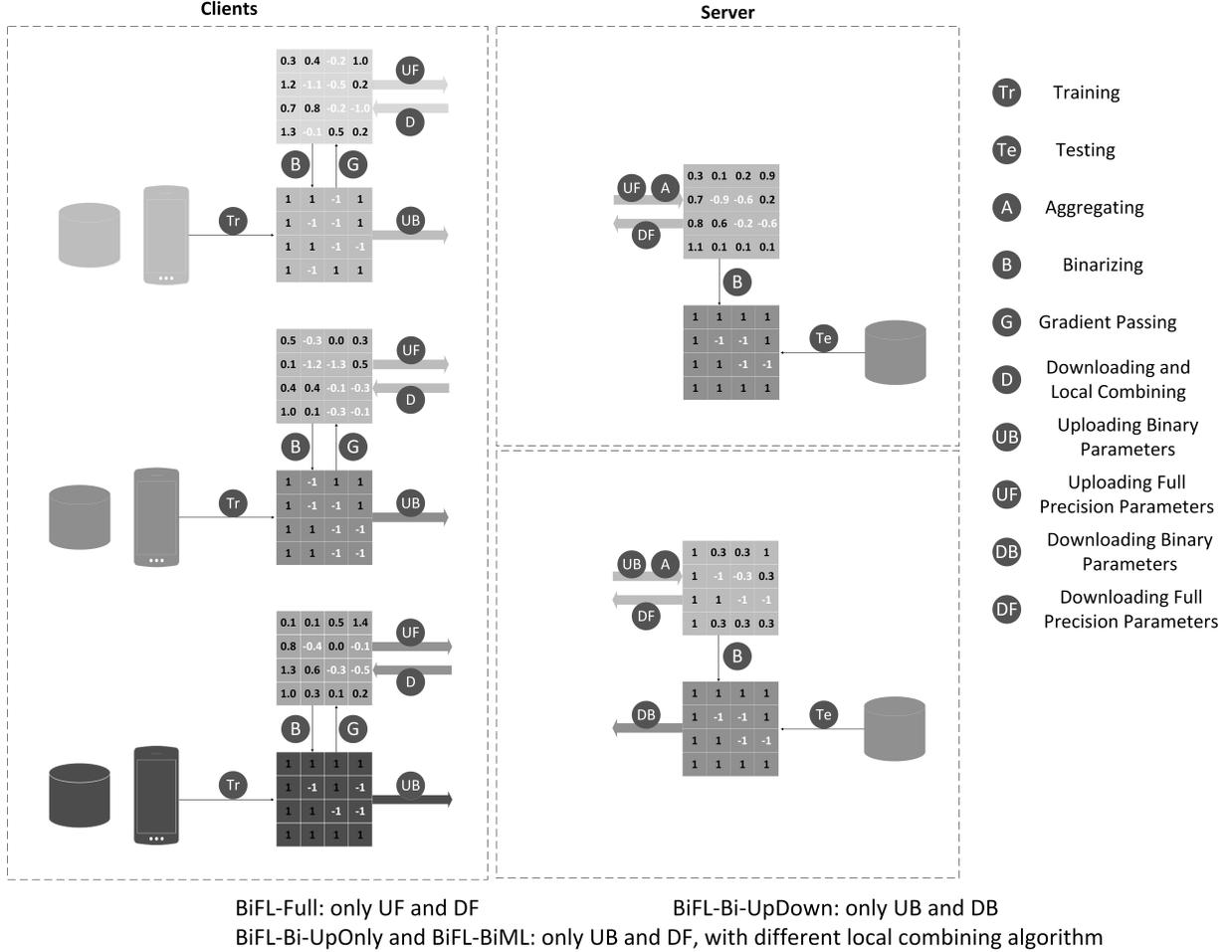}
	  \caption{Illustration of different implementations of BiFL. Only a layer is shown in the illustration representatively.}
	  \label{fig::FLBNN}
\vspace{-0.3 cm}
\end{figure*}

\begin{enumerate}
    \item The most straightforward way is for each client to upload $\overline{\bm{W}}_{t}^i$ to the server. Then following the same process of updating $\bm{\vartheta}^i_{t}$, the server derives the globally aggregated auxiliary real-valued parameters, denoted by $\bm{W}_{t}$, by using the FA algorithm, and then broadcast to all the clients. Each client then updates the local parameters as $\overline{\bm{W}}_{t}^i\leftarrow \bm{W}_{t}$, and $\bm{W}_{t}^{b, i}\leftarrow \text{Sign}(\overline{\bm{W}}_{t})$. We refer to BiFL with this method as BiFL-Full. We note that this way does not take advantage of the binarization of the network parameters, and end up with the same communication cost with typical neural networks.

    \item It is more desirable to upload only $\bm{W}_{t}^{b, i}$ than $\overline{\bm{W}}_{t}^i$ for the purpose of communication cost reduction. In this method, the server also aggregates the received $\bm{W}_{t}^{b, i}$ by the FA algorithm, that is, $\tilde{\bm{W}}_{t}=\frac{1}{|\mathcal{D}|}\sum_{i=1}^{M}|\mathcal{D}_{i}|\bm{W}_{t}^{b, i}$, where the aggregated parameters $\tilde{\bm{W}}_{t}$ is real-valued. Upon receiving $\tilde{\bm{W}}_{t}$ from the server, each client updates the local parameters as $\overline{\bm{W}}_{t}^i \leftarrow \beta\textrm{Sign}(\tilde{\bm{W}}_t)+(1-\beta)\overline{\bm{W}}_{t}^i$, $\bm{W}_{t}^{b, i}\leftarrow \text{Sign}(\overline{\bm{W}}_{t}^i)$, where $\beta$ is a predetermined parameter that ranges in $[0,1]$, which governs the significance of $\tilde{\bm{W}}_t$ on $\overline{\bm{W}}_{t}^i$.
    We refer to BiFL with this method as BiFL-Bi-UpOnly.

    \item This method, referred to as BiFL-Bi-UpDown, is similar to the method 2) where each client uploads the binary parameters. The only difference is that the server further binarizes the aggregated parameter, and broadcasts $\textrm{Sign}(\tilde{\bm{W}}_t)$ instead of $\tilde{\bm{W}}_t$. And the clients update local parameters by $\overline{\bm{W}}_{t}^i \leftarrow \beta\textrm{Sign}(\tilde{\bm{W}}_t)+(1-\beta)\overline{\bm{W}}_{t}^i$, $i=1, ..., M$. This method further reduces the communication cost compared to the former one. However, since the transmission of the aggregated parameters is in a broadcast manner, the reduction is not significant.
    Note that when $\beta=1$, BiFL-Bi-UpDown is equivalent to the algorithm proposed in \cite{TC} despite that it works on BNNs but not ternary neural networks.

    \item The last method we present here shares the same process as the Method 2) except that a novel maximum likelihood estimation based approach is employed to update $\overline{\bm{W}}_{t}^i$ using the received $\tilde{\bm{W}}_{t}$ and the local parameters $\overline{\bm{W}}_{t}^i$, which we will elaborate in details in the next section. We refer to the corresponding BiFL implementation as BiFL-BiML. The numerical results show that this approach achieves better convergence than BiFL-Bi-UpOnly and BiFL-Bi-UpDown.

\end{enumerate}

\subsection{Maximum Likelihood Estimation Based Parameter Updating}\label{MLE}
In this section, we present the proposed novel maximum likelihood estimation based parameter updating approach, referred to as ML-PU, in details. This approach, employed by BiFL-BiML, aims to use the received $\tilde{\bm{W}}_{t}$ from the server and the local parameters $\overline{\bm{W}}_{t}^i$ to approximate $\bm{W}_{t}$, i.e., the globally aggregated auxiliary real-valued parameter by using the FA algorithm. In such way, BiFL-BiML can train the local BNN models to have as good performance as by BiFL-Full while greatly reduces the communication cost. In the following, for the ease of presentation, we omit the subscript $t$ since the process presented below applies to any iteration. For simplicity, we assume that all clients hold the same amount of data, i.e. $|\mathcal{D}_{1}|=...=|\mathcal{D}_{M}|$. We will later show that the proposed approach can be easily generalized to scenarios where the amounts of data on each client are different.


We denote by $\overline{w}^i_j$ and $\tilde{w}_j$ the $j$th parameter within the local set of the auxiliary parameters at user $i$, i.e.,  $\overline{\bm{W}}^i$, and the corresponding $j$th parameter of the received aggregations  $\tilde{\bm{W}}$, respectively, $j =1, ..., N$, where $N$ is the total number of parameters in the BNN model. We consider the real-valued auxiliary parameters $\overline{\bm{W}}^i$ on each client as samplings of normally distributed random variables. More specifically, we assume that for any $j\in\{1,...,N\}$, the $j$th real-valued parameters at all $M$ clients, i.e., $\overline{w}^1_j,...,\overline{w}^M_j$, are independent samplings from the same normal distribution $\mathcal{N}(\mu_{j},\sigma_{j}^{2})$, where $\mu_{j}$ and $\sigma_{j}$ is the expectation and standard derivation of the normal distribution. We also assume that all the parameters $\overline{w}^i_1, ..., \overline{w}^i_j$ at client $i$, $i=1, ..., M$, are independently distributed.

The process of ML-PU is to estimate $\mu_{j}$ independently at each client with the local $\overline{w}^i_j$ and the received  $\tilde{w}_j$, and update the local parameter $\overline{w}^i_j$ with this estimation, denoted by $\hat{\mu}^i_{j}$, at the end of the iteration, i.e., $\overline{w}^i_j \leftarrow \hat{\mu}^i_{j}$. We note that since the local parameters at different clients are usually not identical, the estimations $\hat{\mu}^1_{j}$, ..., $\hat{\mu}^N_{j}$ are therefore of different values. As a result, the updated auxiliary parameters are also not identical across clients, different from the case with BiFL-Full. In the following, we elaborate how to obtain $\hat{\mu}^i_{j}$ with maximum likelihood estimation. For the ease of presentation, we omit the superscript $i$ and the subscript $j$ since the approach presented next applies to any $i \in \{1, ..., M\}$ and any $j \in \{1, ..., N\}$.

Upon receiving $\tilde{w}$, one can easily calculate the number of clients that have uploaded the binary parameters of value $+1$ instead of $-1$ to the server, denoted by $M_{P}$, which yields  $M_{P}=(\tilde{w}+1)M/2$. We denote the number of clients that have uploaded $-1$ by $M_{N}$, and have that $M_{N} = M-M_{P}$. With the knowledge of $M_P$ and $M_N$ and the local auxiliary parameter $\overline{w}$, each client uses the maximum likelihood criterion to estimate the average $\mu$ of the corresponding normal distribution as follows:
\begin{equation}\label{Prob}
\begin{aligned}
\arg\max_{\mu,\sigma}\quad & \binom{M-1}{M_P-\mathbbm{1}({\overline{w}>0})}\times\\
&\big(P\{x\geq0|x\sim\mathcal{N}(\mu,\sigma)\}\big)^{M_P-\mathbbm{1}({\overline{w}>0})}\times\\
&\big(P\{x<0|x\sim\mathcal{N}(\mu,\sigma)\}\big)^{M_N-\mathbbm{1}({\overline{w}<0})}
f(\tilde{w},\mu,\sigma),
\end{aligned}
\end{equation}
where $\mathbbm{1}$ denotes a indicator that equals to $1$ if the argument inside the brackets is true and $0$ otherwise. $f(\cdot,\mu,\sigma)$ represents the probability density function of normal distribution $\mathcal{N}(\mu,\sigma^{2})$, i.e. $f(\tilde{w},\mu,\sigma) = (2\pi\sigma^2)^{-1/2}e^{-(\tilde{w}-\mu)^2/\sigma^2}$.

We note that since $x\sim\mathcal{N}(\mu,\sigma)$, $(x-\mu)/\sigma\sim\mathcal{N}(0,1)$. We denote by $z(a)$ the upper $a$ quantile of the standard normal distribution, i.e. $z(a)=\int_{a}^{\infty}f(t,0,1)\mathrm{d}t$. We have
\begin{equation}\label{Prob2}
\begin{aligned}
	&P\{x\geq0|x\sim\mathcal{N}(\mu,\sigma)\}\\ =&P\{\frac{x-\mu}{\sigma}\geq-\frac{\mu}{\sigma}|\frac{x-\mu}{\sigma}\sim\mathcal{N}(0,1)\}\\
 =&z(-\frac{\mu}{\sigma})=1-z(\frac{\mu}{\sigma})
\end{aligned}
\end{equation}
We then apply the logarithmic transformation and the above transformation to (\ref{Prob}) and remove terms that are irrelative of $\mu$ and $\sigma$. The maximization problem then becomes
\begin{equation}
\begin{aligned}
	\arg\max_{\mu,\sigma}\quad &(M_P-\mathbbm{1}({\overline{w}>0}))\ln(1-z(\frac{\mu}{\sigma}))\\
	&+(M_N-\mathbbm{1}({\overline{w}<0}))\ln(z(\frac{\mu}{\sigma}))\\
&-\ln\sigma-\frac{(\overline{w}-\mu)^{2}}{2\sigma^{2}},\label{Prob3}
\end{aligned}
\end{equation}

We then add the term $1/2\ln\overline{w}^2$ to the objective function, and replace $\mu/\sigma$ by $u$ and $\overline{w}/\sigma$ by $v$, respectively, to simplify (\ref{Prob3}). It yields
\begin{equation}
\begin{aligned}
	\arg\max_{u,v}\quad
	&(M_P-\mathbbm{1}({\overline{w}>0}))\ln(1-z(u))\\
	&+(M_N-\mathbbm{1}({\overline{w}<0}))\ln(z(u))\\
&+\frac{1}{2}\ln v^2-\frac{(v-u)^{2}}{2},\\
s.t. \quad&\overline{w} v\geq0
\label{Prob4}
\end{aligned}
\end{equation}
which is equivalent to the optimization problem given in (\ref{Prob3}). The constraint is due to the fact that $\overline{w} v=\overline{w}^2/\sigma$ which should not be negative.

We take derivative of the objective function with regards to $v$, and obtain an analytical solution of it for the optimization problem, which is
\begin{equation}
v=\frac{u+\textrm{Sign}(\overline{w})\sqrt{u^{2}+4}}{2}.\label{v}
\end{equation}

Replace $v$ with the derived analytical solution. The optimization problem (\ref{Prob4}) becomes
\begin{equation}
\begin{aligned}
\hat{u}\triangleq&\arg\max_{u}f(u)\\
=&(M_P-\mathbbm{1}({\overline{w}>0}))\ln(1-z(u))\\
&+(M_N-\mathbbm{1}({\overline{w}<0}))\ln(z(u))\\
&+\ln\frac{\sqrt{u^{2}+4}+u\textrm{Sign}(\overline{w})}{2}
-\frac{(\frac{u+\textrm{Sign}(\overline{w})\sqrt{u^{2}+4}}{2}-u)^{2}}{2}\\
=&(M_P-\mathbbm{1}({\overline{w}>0}))\ln(1-z(u))\\
&+(M-M_P-\mathbbm{1}({\overline{w}<0}))\ln(z(u))-\ln2\\
&+\ln(\sqrt{u^{2}+4}+u\textrm{Sign}(\overline{w}))
-\frac{(\sqrt{u^{2}+4}-u\textrm{Sign}(\overline{w}))^{2}}{8}.\label{Probu}
\end{aligned}
\end{equation}
Given the values of $M$, $M_P$ and the sign of $\overline{w}$, we can solve the above optimization problem and thus obtain $\hat{u}$ numerically.

Note that for the considered FL system, the number of clients, i.e., $M$, is fixed throughout the whole training process, whereas $M_p$ is usually different for each iteration. To speed up the estimation of $\mu$ during training, we aim to obtain the approximate analytic relation between $\hat{u}$ and $M_P$, denoted by $h(\cdot)$, such that $\hat{u} \approx h(M_P, M, \overline{w})$. Noting that $f(u)$ contains term $\textrm{Sign}(\overline{w}))$, we consider the two cases, i.e., $\overline{w}>0$ and $\overline{w}<0$, separately. For a given $M$, we first derive $\hat{u}-M_P$ curves for both cases by numerically solving \eqref{Probu}, and then derive $h(\cdot)$ by fitting these two curves. For instance, we derive the $\hat{u}-M_P$ curves for $M=100$ which are presented in Fig \ref{fig::curve} (a) and (b), for $\overline{w}>0$ and $\overline{w}<0$, respectively. We can easily observe that the curves are approximately logarithmic thus we fit the curves with the following analytic function
\begin{equation}\begin{aligned}
h(M_P, M=100, \overline{w})=
\begin{cases}
   a_{1}+a_{2}\ln(M_P+a_{3})       & \text{if } \overline{w}>0 \\
   a_{4}+a_{5}\ln(-M_P+a_{6})       & \text{if } \overline{w}<0
  \end{cases}   \end{aligned}
\end{equation}
where $a_{1}$, $a_{2}$ and $a_{3}$ are parameters to be optimized to fit the $\overline{w}>0$ curve, while $a_{4}$, $a_{5}$ and $a_{6}$  for the $\overline{w}<0$ curve. From the symmetry of (\ref{Probu}), we know that $a_4=-a_1$, $a_5=-a_2$ and $a_6=M+a_3$, so we only needs to fit one curve. We also present the curves derived by the fitted analytic functions, for $\overline{w}>0$ and $\overline{w}<0$, in Fig \ref{fig::curve} (a) and (b), respectively, where we can observe that the difference between the curves derived numerically and the approximation curves is neglectable.

\begin{figure} [!htp]
\vspace{-0.15 cm}
    \centering
       \includegraphics[width=\linewidth]{./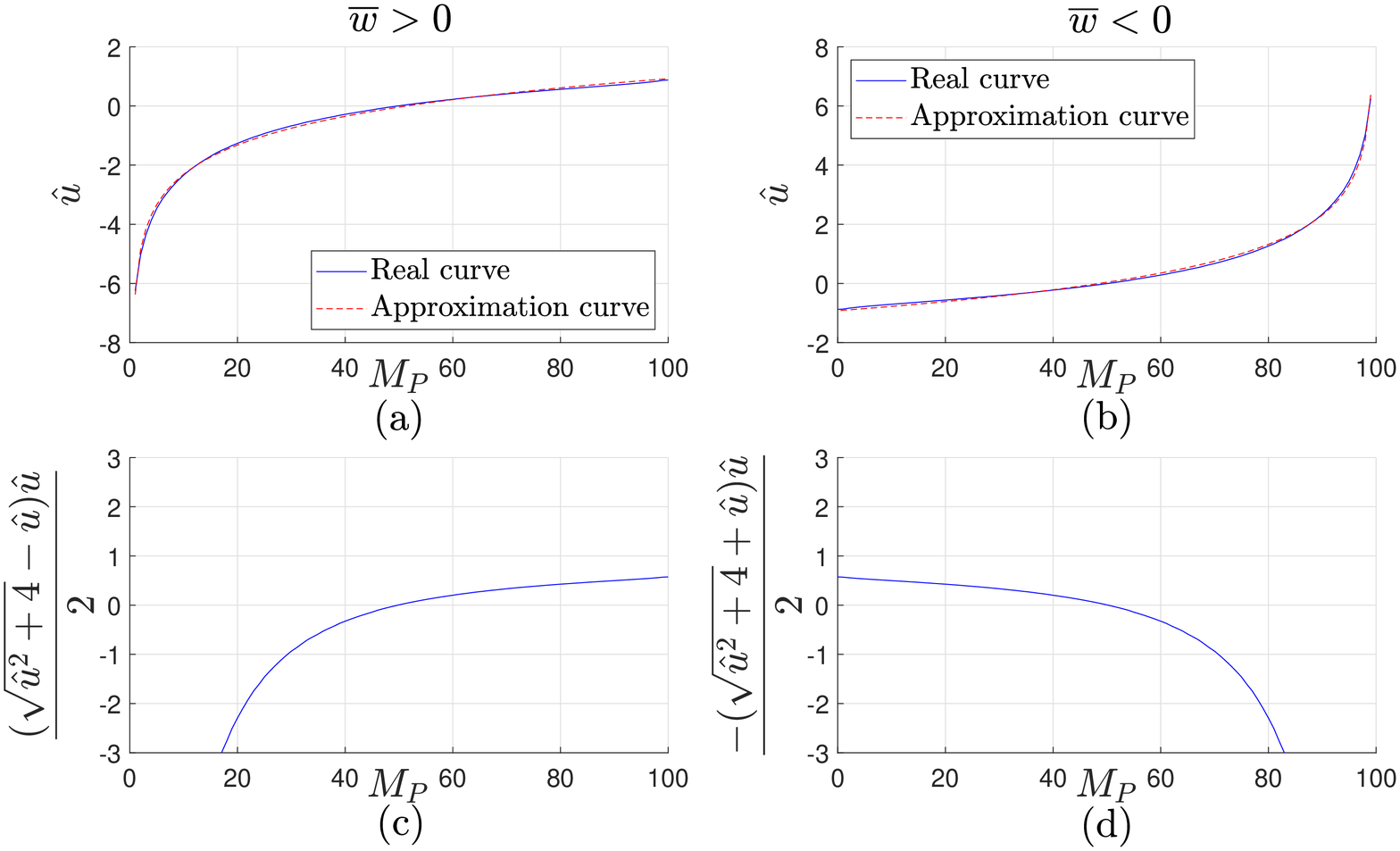}
	  \caption{(a) and (b): Real $\tilde{u}-M_P$ curve and least square approximation logarithmic curve when $M=100$, where $a_{1}=-5.4092$, $a_{2}=1.3761$ and $a_{3}=-0.5038$.
	  (c) and (d): $\frac{(\textrm{Sign}(\overline{w})\sqrt{\hat{u}^2+4}-\hat{u})\hat{u}}{2}-M_P$ curve when $M=100$}
	  \label{fig::curve}
\vspace{-0.3 cm}
\end{figure}

 According to \eqref{v}, we have the estimation of $v$ given by
\begin{eqnarray}
\hat{v} &=& \frac{\hat{u}+\textrm{Sign}(\overline{w})\sqrt{\hat{u}^2+4}}{2}.
\end{eqnarray}
Recalling that $u=\mu/\sigma$ and $v=\overline{w}/\sigma$, we have
\begin{eqnarray}
\hat{\mu}&=& \hat{u}\overline{w}/\hat{v}=\frac{(\textrm{Sign}(\overline{w})\sqrt{\hat{u}^2+4}-\hat{u})\hat{u}\overline{w}}{2}\label{Solution2}
\end{eqnarray}


As we can see in Fig. \ref{fig::curve} (c) and (d), $\frac{(\textrm{Sign}(\overline{w})\sqrt{\hat{u}^2+4}-\hat{u})\hat{u}\overline{w}}{2}$, i.e., $\hat{\mu}/\overline{w}$ takes value in range $(-1,1)$ for a large range of $M_P$, which means using the estimation presented above ends up with $|\hat{\mu}|<|\overline{w}|$. This also implies that $\hat{\mu}$ is not an unbiased estimation of $\mu$ as $\mu$ is the average value of all $\overline{w}$ at clients. Hence by updating $\overline{w}$ with $\hat{\mu}$ at the end of each iteration, $\overline{w}$ may be converging to zero, and lead to the failure of training. To tackle the issue raised above, we introduce a hyperparameter $\alpha$ to our proposed algorithm and update local real-valued parameters by
\begin{equation}
\overline{w} \leftarrow \textrm{clip}(\alpha\hat{\mu},-1,1),\label{Solution3}
\end{equation}
where $\alpha$ is a constant larger than 1.
We note that as auxiliary parameters do not take part in the forward computation, the scaling of them has no impact on the network output. We also note that the auxiliary real-valued parameters during the training of a BNN model are accumulations of gradients scaled by the chosen learning rate. The impact of this scaling on the convergence will be discussed in Section \ref{sec::the1} and how to determine the value of $\alpha$ in Section \ref{sec::alpha}.

Recall that we assumed each client has the same amount of data at local. Below, we extend the proposed ML-PU method to a more general scenario where the sizes of datasets are not identical across clients.  To apply the proposed method, for each client $i$, we assume a virtual FL system with $M_{i}=|\mathcal{D}|/|\mathcal{D}_{i}|$ clients, where each client has a dataset of the $|\mathcal{D}_{i}|$. With the received $\tilde{\bm{W}}_{t}$, client $i$ therefore can update the local auxiliary parameters by the ML-PU method with $M = M_{i}$, $M_{P}=(\tilde{w}+1)M_{i}/2$. We note that $M_{i}$ may not be an integer, which, however, is still
valid to derive the update of the auxiliary parameters following the steps of ML-PU. We validate the effectiveness of this approach with numerical results presented in Section \ref{sec::num}.

\section{Theoretical analysis}\label{sec::the}
In this section, we investigates the conditions under which the parameters of binary models will converge towards the optimal, and also discuss the impact of hyperparameter $\alpha$ on the convergence.
\subsection{The Convergence of Binary Models}\label{sec::the1}
Here we prove the convergence of gradient descent algorithm used to train binary models. We first consider a model with real-valued parameters $\overline{\bm{W}}$, where each element of $\overline{\bm{W}}$ is clipped between $1$ and $-1$. We denote by $F(\bm{W})$ as the average loss on the dataset $\mathcal{D}$ by this model given the set of parameters $\bm{W}$, i.e. $F(\bm{W}) = \frac{1}{|\mathcal{D}|}\sum_{i = 1}^{|\mathcal{D}|}\ell(y_{i},g(\mathbf{x}_{i},\bm{W}))$. Recall that $\ell(y_i, g(\bm{x}_i,\bm{W}))$ is the loss between the true label, denoted by $y_i$ and the prediction, given by $g(\bm{x}_i,\bm{W})$, of the input sample $\bm{x}_i$.

Following the literature on the convergence of gradient based training algorithms, we make the following assumptions, which have been proven to be satisfied for squared-SVM and linear
regression models\cite{Wang}.

\begin{assumption}
$F(\bm{W})$ satisfies
\begin{enumerate}
          \item $F(\bm{W})$ is $\xi$-strong convex and first-order derivable, i.e., $F(\bm{W}_1)-F(\bm{W}_2)\leq\nabla F(\bm{W}_1)^T(\bm{W}_1-\bm{W}_2)-\frac{\xi}{2}\|\bm{W}_1-\bm{W}_2\|^2$ for any $\bm{W}_1$ and $\bm{W}_2$.
          \item $F(\bm{W})$ is $\rho$-Lipschitz, i.e. $\|F(\bm{W}_1)-F(\bm{W}_2)\|\leq\rho\|\bm{W}_1-\bm{W}_2\|$ for any $\bm{W}_1$ and $\bm{W}_2$.
          \item $F(\bm{W})$ is $\beta$-smooth, i.e., $\|\nabla F(\bm{W}_1)-\nabla F(\bm{W}_2)\|\leq\beta\|\bm{W}_1-\bm{W}_2\|$ for any $\bm{W}_1$ and $\bm{W}_2$.
        \end{enumerate}
where $\|\cdot\|$ denotes for the l-2 norm of a vector.\end{assumption}
We have the following lemma, referred to as Descent lemma, followed from Lemma 1.2.3 of \cite{proof}, the proof of which can also be found in \cite{proof}.
\begin{lemma}[descent lemma]
If $F(\bm{W})$ satisfies Assumption 1, then
\begin{equation*}
    \begin{aligned}
    F(\bm{W}_1)-F(\bm{W}_2)\leq&\nabla F(\bm{W}_1)^T(\bm{W}_1-\bm{W}_2)\\
    &-\frac{1}{2\beta}\|\nabla F(\bm{W}_1)-\nabla F(\bm{W}_2)\|^2
    \end{aligned}
\end{equation*}
\end{lemma}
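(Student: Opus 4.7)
The plan is to derive the inequality by reducing to the standard $\beta$-smoothness descent bound applied to a cleverly shifted version of $F$. Define the auxiliary function $\phi(\bm{W}) = F(\bm{W}) - \nabla F(\bm{W}_1)^T \bm{W}$. Since subtracting a linear term leaves the Hessian unchanged, $\phi$ inherits both convexity (implied by the $\xi$-strong convexity in Assumption 1) and $\beta$-smoothness from $F$. Moreover $\nabla\phi(\bm{W}_1) = \nabla F(\bm{W}_1) - \nabla F(\bm{W}_1) = 0$, so by convexity $\bm{W}_1$ is a global minimizer of $\phi$.

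Next I would invoke the standard smoothness upper bound (equivalent to the $\beta$-Lipschitz property of the gradient, and obtainable via a one-dimensional integration of $\nabla\phi$ along the segment joining $\bm{W}_2$ and $\bm{W}$):
$$\phi(\bm{W}) \leq \phi(\bm{W}_2) + \nabla\phi(\bm{W}_2)^T(\bm{W} - \bm{W}_2) + \frac{\beta}{2}\|\bm{W} - \bm{W}_2\|^2.$$
Minimizing the right-hand side over $\bm{W}$ yields its optimum at $\bm{W} = \bm{W}_2 - \frac{1}{\beta}\nabla\phi(\bm{W}_2)$ with value $\phi(\bm{W}_2) - \frac{1}{2\beta}\|\nabla\phi(\bm{W}_2)\|^2$. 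Because $\bm{W}_1$ is a global minimizer of $\phi$, it beats any particular choice of $\bm{W}$, and in particular
$$\phi(\bm{W}_1) \leq \phi(\bm{W}_2) - \frac{1}{2\beta}\|\nabla\phi(\bm{W}_2)\|^2.$$

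Finally, substituting back $\phi(\bm{W}_k) = F(\bm{W}_k) - \nabla F(\bm{W}_1)^T\bm{W}_k$ and $\nabla\phi(\bm{W}_2) = \nabla F(\bm{W}_2) - \nabla F(\bm{W}_1)$, then rearranging, recovers the claimed inequality exactly. The main conceptual step is the construction of $\phi$ so that $\bm{W}_1$ becomes its minimizer; once that trick is in place, everything reduces to routine algebra plus the textbook quadratic upper bound from $\beta$-smoothness. I do not expect the Lipschitz bound from Assumption 1(2) or the strong-convexity constant $\xi$ to play any role in this particular lemma—only convexity and $\beta$-smoothness are needed—so the only real obstacle is keeping the signs straight when swapping the roles of $\bm{W}_1$ and $\bm{W}_2$ inside $\phi$.
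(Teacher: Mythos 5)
Your proof is correct and is essentially the standard argument (the paper itself gives no proof, deferring to the cited reference \cite{proof}, whose proof of this co-coercivity inequality is exactly the shifted-function trick you use: make $\bm{W}_1$ the minimizer of $\phi(\bm{W})=F(\bm{W})-\nabla F(\bm{W}_1)^T\bm{W}$ and compare with the minimized quadratic upper bound at $\bm{W}_2$). The only cosmetic caveat is that Assumption 1 grants only first-order differentiability, so rather than appealing to an unchanged Hessian you should note directly that adding a linear term preserves convexity and leaves gradient differences, hence $\beta$-smoothness, unchanged; everything else, including the final substitution, checks out.
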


Denote by $\bm{W}^b$ the product of applying sign function element-wisely to $\overline{\bm{W}}$, i.e., a binarized version of $\overline{\bm{W}}$, such that $\bm{W}^b\in\{-1,1\}^N$. We have Lemma \ref{lemma::1}, which specifies the math relation between $\bm{W}^b$, $\overline{\bm{W}}$ and any $\bm{V}^b\neq\bm{W}^b, \bm{V}^b\in\{-1,1\}^N$.

\begin{lemma}\label{lemma::1}
For $\bm{W}^b$ and $\overline{\bm{W}}$ defined as above and any $\bm{V}^b\neq\bm{W}^b, \bm{V}^b\in\{-1,1\}^N$, if there is $K$ non identical elements among all the $N$ dimensions in $\bm{W}^b$ and $\bm{V}^b$, we have the following two conclusions:
\begin{enumerate}
    \item $\|\bm{W}^b-\bm{V}^b\|\leq2\|\overline{\bm{W}}-\bm{V}^b\|$;
    \item $|\langle\bm{W}^b-\bm{V}^b,\overline{\bm{W}}-\bm{V}^b\rangle|\leq\arccos\sqrt{\frac{K}{N}}$,
\end{enumerate}
 where $\langle\cdot,\cdot\rangle$ refers to the angle between two vectors.
\end{lemma}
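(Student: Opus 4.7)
The proof splits naturally into two independent claims, and both proceed by a coordinate-wise analysis built on two structural facts: $\bm{W}^b=\mathrm{Sign}(\overline{\bm{W}})$, so $\overline{w}_i$ and $w^b_i$ share a sign, and $|\overline{w}_i|\le 1$ from the clipping step.

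For the first inequality, I would partition the indices into $S=\{i : w^b_i \neq v^b_i\}$ (with $|S|=K$) and its complement $S^{c}$. On $S^{c}$ the left side contributes $0$ and the right side is non-negative, so the inequality is automatic. On $S$ we have $v^b_i=-w^b_i$; since $\overline{w}_i$ shares a sign with $w^b_i$, the values $\overline{w}_i$ and $v^b_i$ have opposite signs, and with $|v^b_i|=1$ this gives $|\overline{w}_i-v^b_i|=1+|\overline{w}_i|\ge 1$. Consequently $(w^b_i-v^b_i)^2=4\le 4(\overline{w}_i-v^b_i)^2$. Summing over $i$ and taking a square root finishes the first claim.

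For the second claim the plan is to lower-bound the cosine of the angle by $\sqrt{K/N}$. The vector $\bm{W}^b-\bm{V}^b$ is supported only on $S$ with entries of magnitude $2$, so $\|\bm{W}^b-\bm{V}^b\|=2\sqrt{K}$. Using the sign alignment from the first part, every summand of the inner product equals $2(1+|\overline{w}_i|)\ge 2$, yielding $\langle\bm{W}^b-\bm{V}^b,\overline{\bm{W}}-\bm{V}^b\rangle\ge 2K$. For $\|\overline{\bm{W}}-\bm{V}^b\|^2$ I would decompose into the $S$-part $\sum_{i\in S}(1+|\overline{w}_i|)^2$ and the $S^{c}$-part $\sum_{i\notin S}(1-|\overline{w}_i|)^2\le N-K$ (where $\overline{w}_i$ and $v^b_i$ now share a sign). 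Combining these three ingredients should then assemble a lower bound of $\sqrt{K/N}$ on $\cos\theta$, i.e.\ $\theta\le\arccos\sqrt{K/N}$.

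The main obstacle sits in this last combination step: the naive coordinate-wise bounds $\sum_{i\in S}(1+|\overline{w}_i|)^2\le 4K$ and $\sum_{i\notin S}(1-|\overline{w}_i|)^2\le N-K$ only deliver $\cos\theta\ge\sqrt{K/(3K+N)}$, which is weaker than the target $\sqrt{K/N}$. The fix I would pursue is to couple numerator and denominator through Cauchy--Schwarz in the form $\bigl(\sum_{i\in S}(1+|\overline{w}_i|)\bigr)^2\le K\sum_{i\in S}(1+|\overline{w}_i|)^2$, so that the dependence on the $|\overline{w}_i|$'s over $S$ cancels between the two bounds and leaves a clean factor of $N$ effectively in the denominator; the extremal configuration $\overline{\bm{W}}=0$ would then drive the sharpness of the final estimate.
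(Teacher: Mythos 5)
Your proof of conclusion 1) is correct and is essentially the paper's own argument: split the coordinates according to whether $w^b_i=v^b_i$, use the sign agreement between $\overline{w}_i$ and $w^b_i$ together with the clipping $|\overline{w}_i|\le 1$ to get $|\overline{w}_i-v^b_i|=1+|\overline{w}_i|\ge\tfrac12|w^b_i-v^b_i|$ on the disagreement set $S$, and sum the squared coordinate-wise inequalities.

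For conclusion 2) you have correctly located the real difficulty, but the proposed fix cannot close it. Writing $s_i=1+|\overline{w}_i|\in[1,2]$ for $i\in S$, the quantity to bound from below is $\cos\theta=\sum_{i\in S}s_i\big/\big(\sqrt{K}\,\|\overline{\bm{W}}-\bm{V}^b\|\big)$. Cauchy--Schwarz in the form $\big(\sum_{i\in S}s_i\big)^2\le K\sum_{i\in S}s_i^2$ bounds the numerator from \emph{above} by the $S$-part of the denominator, so it can only produce an upper bound on $\cos\theta$; to get the claimed $\cos\theta\ge\sqrt{K/N}$ you would need the reverse inequality $\big(\sum_{i\in S}s_i\big)^2\ge K\sum_{i\in S}s_i^2$, which fails whenever the $s_i$ are not all equal (e.g.\ $s=(1,2)$ gives $9<10$). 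You should know that the paper's proof stumbles at exactly this point: its key step asserts $(\bm{W}^b-\bm{V}^b)^T(\overline{\bm{W}}-\bm{V}^b)\ge\sqrt{\sum_{i\in\mathcal{I}}(w^b_i-v^b_i)^2\sum_{i\in\mathcal{I}}(\overline{w}_i-v^b_i)^2}$ ``by Cauchy--Schwarz'', which is the reverse of Cauchy--Schwarz and is false for the same reason. Indeed the stated bound can be violated: with $N=K=2$, $\bm{W}^b=(1,1)$, $\bm{V}^b=(-1,-1)$, $\overline{\bm{W}}=(0.1,1)$, the angle between $(2,2)$ and $(1.1,2)$ is about $16^\circ$, while $\arccos\sqrt{K/N}=0$. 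So the obstacle you ran into is genuine rather than a failure of technique; the elementary estimates deliver only your weaker $\cos\theta\ge\sqrt{K/(3K+N)}$, and conclusion 2) as stated would require either additional hypotheses on $\overline{\bm{W}}$ or a corrected constant.
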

\begin{proof}
The proof of the lemma can be found in Appendix \ref{proof_lemma::1}.
\end{proof}

We then prove the convergence of $\bm{W}^b$ towards the optimal values, which are denoted by $\bm{W}^*$, i.e., \[\bm{W}^* \triangleq \argmin_{\bm{W}^b \in\{-1,1\}^N} F(\bm{W}^b)\] and by $\epsilon=\|\nabla F(\bm{W}^*)\|$ the norm of the gradient at $\bm{W}^*$. We denote by $\bm{W}^b_t$ the parameters of the binary model at time $t$, $\overline{\bm{W}}_t$ the corresponding auxiliary real-valued parameters, and $K_t$ the number of nonidentical elements in $\bm{W}^b_t$ and $\bm{W}^*$, then
\begin{equation}\label{equ::K}
\|\bm{W}^b_t-\bm{W}^*\|=2\sqrt{K_t}.
\end{equation}
Recall that the auxiliary parameters in the binary model are updated by $\overline{\bm{W}}_{t+1}=\overline{\bm{W}}_t-\eta\nabla F(\overline{\bm{W}}^b_t)$ at time $t$. Lemma 3 below gives a sufficient condition under which the auxiliary parameters converge to the optimal set of binary parameters $\bm{W}^*$.

\begin{lemma}\label{lemma::2} For a binary model described above. If the loss function $F(\cdot)$ meets Assumption 1 and the learning rate $\eta$ satisfies
\[0<\eta<\phi(\lambda_t,K_t),\]
where $\lambda_t\triangleq\|\nabla F(\bm{W}^b_t)\|/\epsilon$,
\begin{equation*}
    \begin{aligned}
    \phi(\lambda_t,K_t)=&\frac{\lambda_t-1}{\sqrt{\beta^3\lambda_t^2 N(\lambda_t^2+1)}}(\lambda_t\sqrt{\xi K_t}-\sqrt{\xi(N-K_t)}\\&-\lambda_t\sqrt{(\beta-\xi) (N-K_t)}
    -\sqrt{K_t(\beta-\xi)}),
    \end{aligned}
\end{equation*}
and $\beta$ and $\xi$ are defined in Assumption 1, then we have
\begin{enumerate}
    \item $\frac{K_t\xi-\epsilon}{\epsilon}\leq \lambda_t\leq\frac{K_t\beta+\epsilon}{\epsilon}$;
    \item $\|\overline{\bm{W}}_{t+1}-\bm{W}^*\|^2< \|\overline{\bm{W}}_t-\bm{W}^*\|^2-4(\phi(\lambda_t,K_t)-\eta)\eta\beta\xi K_t.$
\end{enumerate}
\end{lemma}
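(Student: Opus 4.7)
The plan is to handle the two claims separately. Claim (1) is a pure consequence of $\beta$-smoothness, $\xi$-strong convexity and the triangle inequality applied to the gradient difference $\nabla F(\bm{W}^b_t)-\nabla F(\bm{W}^*)$, combined with the identity $\|\bm{W}^b_t-\bm{W}^*\|=2\sqrt{K_t}$ from \eqref{equ::K}. Claim (2), the key descent inequality, comes from expanding $\|\overline{\bm{W}}_{t+1}-\bm{W}^*\|^2$ using the update rule $\overline{\bm{W}}_{t+1}=\overline{\bm{W}}_t-\eta\nabla F(\bm{W}^b_t)$ and then lower bounding the cross term $\nabla F(\bm{W}^b_t)^T(\overline{\bm{W}}_t-\bm{W}^*)$ via a decomposition that routes through $\bm{W}^b_t$, invoking Assumption 1 on the ``binary part'' and Lemma \ref{lemma::1} on the ``auxiliary part''.

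For Claim (1), I will start from $\beta$-smoothness to get $\|\nabla F(\bm{W}^b_t)-\nabla F(\bm{W}^*)\|\le\beta\|\bm{W}^b_t-\bm{W}^*\|$, and from $\xi$-strong convexity together with Cauchy--Schwarz (applied to the co-coercivity $\langle\nabla F(\bm{W}^b_t)-\nabla F(\bm{W}^*),\bm{W}^b_t-\bm{W}^*\rangle\ge\xi\|\bm{W}^b_t-\bm{W}^*\|^2$) to get the matching lower bound $\|\nabla F(\bm{W}^b_t)-\nabla F(\bm{W}^*)\|\ge\xi\|\bm{W}^b_t-\bm{W}^*\|$. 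Two triangle-inequality steps (one forward, one reverse) then sandwich $\|\nabla F(\bm{W}^b_t)\|$ between $\|\bm{W}^b_t-\bm{W}^*\|\xi-\epsilon$ and $\|\bm{W}^b_t-\bm{W}^*\|\beta+\epsilon$; dividing by $\epsilon$ and substituting \eqref{equ::K} yields the stated bounds on $\lambda_t$.

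For Claim (2), expanding the update gives
\begin{equation*}
\|\overline{\bm{W}}_{t+1}-\bm{W}^*\|^2 = \|\overline{\bm{W}}_t-\bm{W}^*\|^2 - 2\eta\,\nabla F(\bm{W}^b_t)^T(\overline{\bm{W}}_t-\bm{W}^*) + \eta^2\|\nabla F(\bm{W}^b_t)\|^2,
\end{equation*}
so the job reduces to producing a sufficiently strong lower bound on the cross term. I will split it as
$\nabla F(\bm{W}^b_t)^T(\overline{\bm{W}}_t-\bm{W}^*)=\nabla F(\bm{W}^b_t)^T(\bm{W}^b_t-\bm{W}^*)+\nabla F(\bm{W}^b_t)^T(\overline{\bm{W}}_t-\bm{W}^b_t)$. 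The first piece is bounded from below by combining strong convexity ($F(\bm{W}^b_t)-F(\bm{W}^*)\ge\nabla F(\bm{W}^b_t)^T(\bm{W}^b_t-\bm{W}^*)-\tfrac{\xi}{2}\|\bm{W}^b_t-\bm{W}^*\|^2$ rearranged appropriately) with the descent lemma, giving a term proportional to $\xi K_t$ and a term involving $\|\nabla F(\bm{W}^b_t)-\nabla F(\bm{W}^*)\|^2/(2\beta)$. The second piece is controlled geometrically: Lemma \ref{lemma::1}(2) caps the angle between $\bm{W}^b_t-\bm{W}^*$ and $\overline{\bm{W}}_t-\bm{W}^*$ by $\arccos\sqrt{K_t/N}$, and this, together with Lemma \ref{lemma::1}(1), lets me write $\overline{\bm{W}}_t-\bm{W}^b_t$ as the difference of two vectors whose mutual angle and magnitude ratio are both controlled, so that Cauchy--Schwarz produces a lower bound involving the $\sqrt{(\beta-\xi)(N-K_t)}$ and $\sqrt{K_t(\beta-\xi)}$ factors that appear in $\phi$.

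The main obstacle will be Step 3: reassembling the two lower bounds into exactly the form $-4(\phi(\lambda_t,K_t)-\eta)\eta\beta\xi K_t$. The algebra is delicate because $\phi$ mixes $\sqrt{\xi K_t}$, $\sqrt{\xi(N-K_t)}$, $\sqrt{(\beta-\xi)(N-K_t)}$ and $\sqrt{K_t(\beta-\xi)}$ in a single linear combination multiplied by $(\lambda_t-1)/\sqrt{\beta^3\lambda_t^2 N(\lambda_t^2+1)}$, which strongly suggests that the geometric decomposition has to be performed in a basis adapted to $\bm{W}^b_t-\bm{W}^*$, decomposing $\nabla F(\bm{W}^b_t)$ and $\overline{\bm{W}}_t-\bm{W}^*$ into components parallel and orthogonal to this direction, and then eliminating $\|\nabla F(\bm{W}^b_t)\|$ using Claim (1) to introduce $\lambda_t$. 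I expect each factor in $\phi$ to trace back to a specific orthogonal component, and the $(\lambda_t-1)$ prefactor to emerge from the gap between the forward and reverse triangle-inequality bounds of Claim (1). Once this identification is done, the hypothesis $0<\eta<\phi(\lambda_t,K_t)$ directly yields the strict inequality.
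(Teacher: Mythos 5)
Your route to claim (1) is essentially the paper's: sandwich $\|\nabla F(\bm{W}^b_t)-\nabla F(\bm{W}^*)\|$ between $\xi\|\bm{W}^b_t-\bm{W}^*\|$ and $\beta\|\bm{W}^b_t-\bm{W}^*\|$ and apply the triangle inequality with $\epsilon=\|\nabla F(\bm{W}^*)\|$. For claim (2), however, your plan is missing the one idea that actually produces $\phi$, and the decomposition you propose in its place stalls. The paper does not split the cross term additively as $\nabla F(\bm{W}^b_t)^T(\bm{W}^b_t-\bm{W}^*)+\nabla F(\bm{W}^b_t)^T(\overline{\bm{W}}_t-\bm{W}^b_t)$; it bounds the \emph{angle} $\langle\nabla F(\bm{W}^b_t),\overline{\bm{W}}_t-\bm{W}^*\rangle$ by chaining three angle bounds at the common vertex: (i) $\langle\nabla F(\bm{W}^b_t)-\nabla F(\bm{W}^*),\bm{W}^b_t-\bm{W}^*\rangle\leq\arccos\sqrt{\xi/\beta}$, obtained by taking the geometric mean of the co-coercivity bound $\tfrac{1}{\beta}\|\nabla F(\bm{W}^b_t)-\nabla F(\bm{W}^*)\|^2$ (from the descent lemma applied in both directions) and the strong-convexity bound $\xi\|\bm{W}^b_t-\bm{W}^*\|^2$ on the same inner product; (ii) $\langle\bm{W}^b_t-\bm{W}^*,\overline{\bm{W}}_t-\bm{W}^*\rangle\leq\arccos\sqrt{K_t/N}$ from Lemma \ref{lemma::1}; and (iii) $\langle\nabla F(\bm{W}^b_t),\nabla F(\bm{W}^b_t)-\nabla F(\bm{W}^*)\rangle\leq\arctan(1/\lambda_t)$ from the triangle inequality and the definition of $\lambda_t$. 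The function $\phi$ is exactly $\tfrac{\lambda_t-1}{\lambda_t}\cos\bigl(\arccos\sqrt{\xi/\beta}+\arccos\sqrt{K_t/N}+\arccos\tfrac{\lambda_t}{\sqrt{\lambda_t^2+1}}\bigr)$ expanded by the cosine addition formula, which is where the four square-root terms come from; the cross term is then lower-bounded as $\|\nabla F(\bm{W}^b_t)\|\,\|\overline{\bm{W}}_t-\bm{W}^*\|$ times this cosine, and claim (1) converts the norms into $\beta$, $\xi$, $\lambda_t$ and $K_t$.

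The concrete gap in your version is the second piece of your split, $\nabla F(\bm{W}^b_t)^T(\overline{\bm{W}}_t-\bm{W}^b_t)$. Lemma \ref{lemma::1} says nothing about the vector $\overline{\bm{W}}_t-\bm{W}^b_t$ itself; it only constrains the angle between $\bm{W}^b_t-\bm{W}^*$ and $\overline{\bm{W}}_t-\bm{W}^*$, both measured from the reference point $\bm{W}^*$. The difference $\overline{\bm{W}}_t-\bm{W}^b_t$ can have norm up to $\sqrt{N}$ and be oriented almost arbitrarily with respect to the gradient, so no sign-definite lower bound on that inner product is available from the stated lemmas, and you cannot recover the $\sqrt{(\beta-\xi)(N-K_t)}$ and $\sqrt{K_t(\beta-\xi)}$ terms this way. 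You correctly flag the reassembly as the main obstacle, but that reassembly is the proof: without recognizing $\phi$ as a cosine of a sum of three angles, the target inequality $\|\overline{\bm{W}}_{t+1}-\bm{W}^*\|^2<\|\overline{\bm{W}}_t-\bm{W}^*\|^2-4(\phi(\lambda_t,K_t)-\eta)\eta\beta\xi K_t$ is not reachable from the pieces you have assembled.
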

\begin{proof}
The proof of the lemma can be found in Appendix \ref{proof_lemma::2}.
\end{proof}

We then discuss the conditions given in Lemma 3 and convergence rate achieved by the gradient descent algorithm for binary models. Note that $\phi(\lambda_t,K_t)$ can be rewritten as
\begin{equation}
\begin{aligned}
\phi(\lambda_t,K_t)=&\frac{\lambda_t-1}{\lambda_t}\cos(\arccos{\sqrt{\frac{\xi}{\beta}}}+\arccos\sqrt{\frac{K_t}{N}}\\&+\arccos\frac{\lambda_t}{\sqrt{\lambda_t^2+1}}).
\end{aligned}
\end{equation}
We can easily observe that $\phi(\lambda_t,K_t)$ increases with both $\lambda_t$ and $K_t$. Hence there exists a pair of $(\lambda_0, K_0)$ where $\lambda_0=K_0 \xi/\epsilon-1$, such that $\phi(\lambda_t,K_t)>\eta$, for any $\lambda_t>\lambda_0$ and $K_t>K_0$. This satisfies the condition in Lemma 3 on $\phi(\lambda_t,K_t)$. Hence, with Lemma 3, we can conclude that the auxiliary parameters can always converge to at least a suboptimal set of binary parameters if the loss function $F(\cdot)$ meets the conditions given in Assumption 1. Recall that $\epsilon\triangleq\|\nabla F(\bm{W}^*)\|$, which is close to zero since $\bm{W}^*$ is the optimal set of parameters. Also note that $\lambda_t\triangleq\|\nabla F(\bm{W}^b_t)\|/\epsilon$, hence $\lambda_t\rightarrow \infty$, which yields $\phi(\lambda_t,K_t)\approx\cos(\arccos{\sqrt{\frac{\xi}{\beta}}}+\arccos\sqrt{\frac{K_t}{N}})$. Note that when $K_t\approx\frac{(\beta-\xi)N}{\beta}$, we have $\phi(\lambda_t,K_t)\geq 0$. Hence, we can guarantee the model to converge to have at most  $\frac{(\beta-\xi)N}{\beta}$ different elements to the optimal solution. That is, $\frac{(\beta-\xi)N}{\beta}$ is an upper bound on difference between the trained model and the optimal solution. We have the following theorem showing the convergence rate of binary models by the gradient descent algorithm.

\begin{theorem}
\label{lemma::3}
The gradient descent algorithm for binary models provides  geometric convergence to a set of parameters $\bm{W}^b$  that are with at most $K_0$ elements different from the optimized { binary} parameters $\bm{W}^*$, that is, $\forall \kappa>0$, there exists a minimum $T_0$ such that for $t \geq T_0$
\[{ K_t=\|\bm{W}_{t}^b-\bm{W}^*\|/2}<K_0+\kappa,\]
and $T_0=\mathcal{O}(\log\frac{1}{K_0+\kappa})$ for any initial $\overline{\bm{W}}_0$.
\end{theorem}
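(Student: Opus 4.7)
The plan is to iterate the one-step descent inequality of Lemma 3 and turn it into a geometric contraction of the real-valued squared distance $\|\overline{\bm{W}}_t - \bm{W}^*\|^2$, then read off the bound on $K_t$ using a sandwich relation between $K_t$ and this distance.

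First I would isolate the regime in which the descent is genuinely strict. From the discussion following Lemma 3, $\phi(\lambda_t, K_t)$ is increasing in both arguments and crosses $\eta$ at the threshold pair $(\lambda_0, K_0)$, so whenever $K_t \geq K_0 + \kappa$ the lower bound $\lambda_t \geq (K_t\xi - \epsilon)/\epsilon$ from Lemma 3(1) pins $\phi(\lambda_t, K_t) - \eta$ below by a positive constant $\delta_0 = \delta_0(\kappa)$ independent of $t$.

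Next I would establish the two-sided comparison between $K_t$ and $\|\overline{\bm{W}}_t - \bm{W}^*\|^2$. Lemma 2(1) yields $4K_t = \|\bm{W}^b_t - \bm{W}^*\|^2 \leq 4\|\overline{\bm{W}}_t - \bm{W}^*\|^2$, i.e.\ $K_t \leq \|\overline{\bm{W}}_t - \bm{W}^*\|^2$. Conversely, since $\bm{W}^* \in \{-1,1\}^N$ and each coordinate of $\overline{\bm{W}}_t$ is clipped to $[-1,1]$, sign-agreeing coordinates contribute at most $1$ and sign-disagreeing ones at most $4$ to the squared distance, so
\[ \|\overline{\bm{W}}_t - \bm{W}^*\|^2 \leq 3K_t + N. \]
Because the paper's own analysis places the equilibrium at $K_0 \approx (\beta-\xi)N/\beta = \Theta(N)$, the ratio $K_t/\|\overline{\bm{W}}_t - \bm{W}^*\|^2$ is lower-bounded by a constant $c_1 > 0$ throughout the regime $K_t \geq K_0 + \kappa$. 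Plugging into Lemma 3(2) then gives the contraction
\[ \|\overline{\bm{W}}_{t+1} - \bm{W}^*\|^2 \leq (1 - 4\delta_0 \eta \beta \xi c_1)\,\|\overline{\bm{W}}_t - \bm{W}^*\|^2, \]
which iterates, via the trivial initial bound $\|\overline{\bm{W}}_0 - \bm{W}^*\|^2 \leq 4N$, to $\|\overline{\bm{W}}_t - \bm{W}^*\|^2 \leq 4N c^{t}$ with $c = 1 - 4\delta_0\eta\beta\xi c_1 \in (0,1)$. Since $K_t \leq \|\overline{\bm{W}}_t - \bm{W}^*\|^2$, the condition $4N c^t < K_0 + \kappa$ already forces $K_t < K_0 + \kappa$, and solving for $t$ yields $T_0 = \mathcal{O}\!\left(\log \tfrac{1}{K_0+\kappa}\right)$ when $N$ is treated as a model constant.

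The delicate step I expect is the boundary of the contracting regime: the strict gap $\phi > \eta$ disappears once $K_t$ has dipped below $K_0$, so the theorem's claim that $K_t < K_0 + \kappa$ holds for \emph{all} $t \geq T_0$, not merely at $t = T_0$, requires an extra argument that the iterate cannot wander back out. The non-expansiveness of the clipping projection onto the convex set $[-1,1]^N$, which contains $\bm{W}^*$ as a fixed point, ensures $\|\overline{\bm{W}}_{t+1} - \bm{W}^*\| \leq \|\overline{\bm{W}}_t - \eta\nabla F(\bm{W}^b_t) - \bm{W}^*\|$, so once the squared distance has fallen below $K_0 + \kappa$ Lemma 3(2) plus this non-expansiveness keeps it there. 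Making this persistence argument airtight, together with a clean quantification of $\delta_0(\kappa)$ in terms of $\beta$, $\xi$ and $\epsilon$, is the principal technical hurdle.
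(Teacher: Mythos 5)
Your proposal is correct and follows essentially the same route as the paper: both sandwich $K_t$ between constant multiples of $\|\overline{\bm{W}}_t-\bm{W}^*\|^2$ (the paper's bound $\|\overline{\bm{W}}_t-\bm{W}^*\|^2<(3+N/K_0)K_t$ plays the role of your $c_1$), convert the one-step descent of Lemma 3 into a geometric contraction with a fixed ratio $1-C_2$ obtained from the monotonicity of $\phi$, and then solve for $t$. The one place you go beyond the paper is the persistence question: the paper only argues that the iterate \emph{reaches} the ball $K_t<K_0+\kappa$ (splitting on whether the learning-rate condition of Lemma 3 ever fails before $t_0$, in which case it asserts $K_{t_1}<K_0$ already), and does not supply your non-expansiveness-of-clipping argument that the iterate cannot subsequently wander back out.
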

\begin{proof}
The proof of the { theorem} can be found in Appendix \ref{proof_lemma::3}.
\end{proof}

We then discuss the impact of scaling the ML estimation with hyperparameter $\alpha$ to update the weights as shown in \eqref{Solution3} on the convergence of BiFL-BiML.
We note that this is equivalent to scaling the learning rate. Therefore, similarly, if $\alpha$ is too small, the model will be too slow to converge. On the other hand, when $\alpha$ is too big, the model may converge too quickly to a suboptimal solution. Hence we normally choose a value of $\alpha$ that is slightly larger than 1.

\subsection{The Convergence of Binary Models in FL Setting}\label{sec::alpha}
In this section, we discuss the convergence of BiFL-BiML, and explore the favorable range of the scaling parameter $\alpha$. We have shown that the ML estimation is not unbiased in Section \ref{MLE}. Recall that the introduction of $\alpha$ is to scale the estimation to be unbiased. The following theorem shows that BiFL-BiML converges if each parameter $\overline{w}$ is updated with an unbiased estimation in every iteration.

\begin{theorem}
If each parameter $\overline{w}$ is updated with an unbiased estimation and Assumption 1 holds,  BiFL-BiML is of geometric convergence in time and the number of clients $M$, that is, $\forall \kappa>0$, there exists a minimum $T_0$ such that for $t \geq T_0$
\[\|\bm{W}_{m,t}^b-\bm{W}^*\|/2<K_0+\kappa,\]
and $T_0=\mathcal{O}(\frac{1}{M}\log\frac{1}{K_0+\kappa})$ for any initial $\overline{\bm{W}}_{m,0}$.\end{theorem}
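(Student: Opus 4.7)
The plan is to reduce Theorem~2 to an $M$-amplified version of Theorem~\ref{lemma::3} by exploiting the unbiasedness assumption. First, I would observe that since $\mathbb{E}[\alpha\hat{\mu}^m_j]=\mu_j$ and, under the Gaussian sampling model of Section~\ref{MLE}, the FA aggregate $\tfrac{1}{M}\sum_{i=1}^{M}\overline{w}^i_j$ is likewise an unbiased estimator of $\mu_j$ with variance $\sigma_j^2/M$, the ML-PU update effectively replaces each client's auxiliary parameter by the FA aggregate of all clients in expectation. Hence one BiFL-BiML iteration at client $m$ coincides (up to a mean-zero, $\mathcal{O}(1/M)$-variance residual) with a local gradient step followed by replacement with $\tfrac{1}{M}\sum_{i=1}^{M}\overline{\bm{W}}^i$. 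Structurally, this is a single centralized gradient step whose gradient estimator has been aggregated over $M$ independent local gradients, so its effective variance shrinks by a factor $M$.

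Next, I would re-derive the key descent inequality of Lemma~\ref{lemma::2} under this aggregation. Because that inequality is built from strong convexity and smoothness of $F$ applied along $\overline{\bm{W}}_t-\bm{W}^*$, replacing the single-client gradient by the averaged aggregate $\tfrac{1}{M}\sum_i\nabla F_i(\bm{W}^{b,i}_t)$ amplifies the inner product $\langle\nabla F,\overline{\bm{W}}_t-\bm{W}^*\rangle$ by a factor $M$ (each local gradient is aligned with $\overline{\bm{W}}_t-\bm{W}^*$ by Assumption~1, so the $M$ contributions add constructively). This yields
\[\|\overline{\bm{W}}^m_{t+1}-\bm{W}^*\|^2<\|\overline{\bm{W}}^m_t-\bm{W}^*\|^2-4M(\phi(\lambda_t,K_t)-\eta)\,\eta\beta\xi K_t,\]
which is exactly the Lemma~\ref{lemma::2} bound rescaled by $M$. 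Iterating this strengthened per-step contraction exactly as in the proof of Theorem~\ref{lemma::3} gives $\|\overline{\bm{W}}^m_t-\bm{W}^*\|^2\le(1-cM)^t\|\overline{\bm{W}}^m_0-\bm{W}^*\|^2$ for some $c>0$ depending on $\eta,\xi,\beta$, and solving $(1-cM)^{T_0}\le(K_0+\kappa)/\|\overline{\bm{W}}^m_0-\bm{W}^*\|^2$ delivers the claimed $T_0=\mathcal{O}\!\left(\tfrac{1}{M}\log\tfrac{1}{K_0+\kappa}\right)$.

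The main obstacle will be making the factor-$M$ amplification rigorous in the presence of the sign nonlinearity and the $\mathrm{clip}(\alpha\hat{\mu},-1,1)$ projection in \eqref{Solution3}: one must either take expectations of the descent inequality and control the residual $\mathcal{O}(1/M)$ estimator variance via Jensen's inequality applied to the convex map $\bm{W}\mapsto\|\bm{W}-\bm{W}^*\|^2$, or promote the bound to hold with high probability via a concentration argument on the $M$ samples $\overline{w}^1_j,\dots,\overline{w}^M_j$. A secondary technical check, easy but necessary, is that the clip is nonexpansive with respect to $\bm{W}^*\in\{-1,1\}^N\subset[-1,1]^N$, so composing it with the descent step cannot undo the contraction; this should be verified before iterating the recursion.
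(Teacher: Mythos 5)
There is a genuine gap, and it sits at the heart of your argument: the claimed factor-$M$ amplification of the descent inequality is false. The FA aggregation forms the \emph{average} $\tfrac{1}{M}\sum_{i}\nabla F_i$, so the inner product with $\overline{\bm{W}}_t-\bm{W}^*$ is the average of the $M$ individual inner products, not their sum; alignment of the local gradients with the descent direction gives you a quantity of the same order as a single client's, never $M$ times larger. Your proposed inequality
\begin{equation*}
\|\overline{\bm{W}}^m_{t+1}-\bm{W}^*\|^2<\|\overline{\bm{W}}^m_t-\bm{W}^*\|^2-4M\bigl(\phi(\lambda_t,K_t)-\eta\bigr)\eta\beta\xi K_t
\end{equation*}
cannot hold: for $M$ large enough the right-hand side is negative while the left-hand side is a squared norm, and likewise the contraction factor $(1-cM)^t$ you iterate becomes meaningless once $M>1/c$. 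So the route of ``Lemma~\ref{lemma::2} rescaled by $M$'' does not close, and with it the derivation of $T_0=\mathcal{O}(\tfrac{1}{M}\log\tfrac{1}{K_0+\kappa})$ collapses. Your instinct that unbiasedness reduces ML-PU, in expectation, to replacement by the FA aggregate is the right first step and matches the paper; the error is in where the $M$-dependence of the rate comes from.

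The paper obtains the $M$-dependence by a different mechanism: after using unbiasedness to argue the aggregation step preserves expectations, it invokes Theorem~5 of \cite{Qu20} for federated learning of overparameterized models, which gives $\mathbb{E}F(\bm{W}_{m,T})-F(\bm{W}^*)\leq\mathcal{O}\bigl(\beta\exp\bigl(-\tfrac{\mu}{E}\tfrac{MT}{\beta_0\nu_{\max}+\beta(M-\nu_{\min})}\bigr)\|\bm{W}_{m,0}-\bm{W}^*\|^2\bigr)$, and then converts the function-value bound into a distance bound via $\xi$-strong convexity before concluding as in Theorem~\ref{lemma::3}. Note that in that bound the linear speedup in $M$ enters through the ratio of the per-sample smoothness $\beta_0$ to the global smoothness $\beta$ (the exponent saturates at $-\mu T/(E\beta)$ as $M\to\infty$), i.e., it is a variance-reduction/overparameterization effect, not a constructive addition of gradients. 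If you want to repair your proof without importing \cite{Qu20}, you would need to take expectations of the squared-distance recursion, show the averaged update has the same mean descent as a single centralized step but with estimator variance reduced by $1/M$, and extract the $M$-speedup from that variance term --- your closing remarks about Jensen's inequality and the nonexpansiveness of the clip onto $[-1,1]^N\supset\{-1,1\}^N$ are the right technical checks for that program, but they do not substitute for the missing variance argument.
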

\begin{proof}
The proof of the theorem can be found in Appendix \ref{proof_lemma::4}.
\end{proof}

Note that it is intractable to obtain a close-form expression on the value of $\alpha$ that imposes unbiased estimation. As aforementioned in Section \ref{sec::the1}, $\alpha$ is preferably slightly larger than 1, and  numerical results reveals that the range of $\alpha$ leads to a satisfactory performance of the proposed algorithm is between $1.25$ and $2$.

\section{Numerical results}\label{sec::num}

To numerically evaluate the proposed federated learning framework with BNN, We consider a simple neural network shown in Fig.\ref{fig::BNN}, where we denote by ``Conv $m\times n\times n$'' a $m$-channel convolutional layer with a $n\times n$ kernel, ``FC $m$'' a fully connected layer with output size of $m$, "BN" a batch normalization layer, "$\downarrow$2" a pooling layer downsampling the input by a factor of 2, and both "Tanh" and "Softmax" the hyperbolic tangent and softmax activation function, respectively. We note that the last layer uses the softmax activation instead of tanh to get the predicted labels.
We evaluate the performance of the proposed method on three datasets. The first is the handwritten digit recognition dataset MNIST\cite{MNIST}, the second is the Federated Extended MNIST (FEMNIST) dataset\cite{FEMNIST}, which is a well-known dataset for federated learning, and the last is the CelebA\cite{FEMNIST}, which is a face recognition dataset based on the photographs of celebrities, and also a well-known dataset for federated learning. Since MNIST dataset is simpler, we use the smaller group of parameters in Fig.\ref{fig::BNN} for MNIST and the larger for FEMNIST. We use a ResNet with 4 convolutional layers and a full-connect layer for CelebA. The details of our neural network can be found in our code.\footnote{https://github.com/yuzhiyang123/FL-BNN}.
We compare the proposed model to other federated learning algorithms in terms of both the accuracy and the communication cost.

\begin{figure} [!htp]
\vspace{-0.15 cm}
    \centering
       \includegraphics[width=0.9\linewidth]{./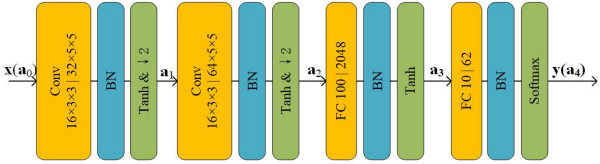}
	  \caption{The structure of the considered binary neural network}
	  \label{fig::BNN}
\vspace{-0.3 cm}
\end{figure}

\subsection{Settings}
We consider a star-topology federated learning system with one server and $M$ clients. We distribute our dataset to each of the clients. The algorithms we compare to are listed in the following, as well as the ways of distributing datasets to the clients.
\subsubsection{Compared algorithms}\label{sec::comparealg}
We compare the following algorithms numerically in this section, where we note that all of them employ the same neural network structure, the same learning rate, and are trained with the same optimizer. The neural network of BiFL is exactly the same as that of baseline and FA, except that it applies binarization on convolutional and full-connected layers.
\begin{itemize}
  \item Baseline: the centralized learning algorithm, where the whole dataset is located on a single computing center, and the neural network are with real-valued parameters.
  \item FA: the federated learning algorithm proposed in \cite{FL}, where the neural network are with real-valued parameters. The dataset is distributed among all the clients.
  \item BNN: the centralized learning algorithm as the baseline, but the neural network is with binary parameters instead, trained in the way as explained in Section \ref{sec::bnn}.
  \item FTTQ: the algorithm raised in \cite{TC}, which is a communication efficient FL algorithm for ternary neural networks.
  \item BiFL-Full, BiFL-Bi-UpOnly, BiFL-Bi-UpDown and  BiFL-BiML, which are different implements of federated learning system with BNN, which we explain in Section \ref{sec::FLBNN} in details.
\end{itemize}
\subsubsection{Dataset}
We use two datasets in the experiments. The first dateset we use is the MNIST dataset\cite{MNIST}, which consists of 70000 $28\times28$ gray-scale handwritten images. We divide it into a training set of 60000 images and a testing set of 10000 images, where all images of either set are evenly distributed in 10 classes. The second dataset we use is the Federated Extended MNIST (FEMNIST) dataset\cite{FEMNIST}, which is a well-known dataset for federated learning. The FEMNIST dataset is based on the Extended MNIST dataset, which is a more complicated { image} classification problem. { The FEMNIST dataset is based on the Extended MNIST dataset, which is a more complicated image classification problem. More specifically, the dataset contains images of handwritten numbers and English alphabets, which has 62 classes in total. It is recommended to use only a part of it to validate FL algorithms \cite{FEMNIST}, since the dataset is very large, and the distributed algorithms will not be well examined if the learning ability of the clients is too high. In our experiments, we randomly select 10\% of the Extended MNIST dataset for each experiment. For similar reason, we randomly select 50\% of the CelebA dataset for each experiment. Reference \cite{FEMNIST} also provides code for partitioning the FEMNIST and CelebA dataset by both IID and non-IID manners, and for evaluating methods. We evaluate our algorithm following their work.}
\subsubsection{Data distributions}\label{data}
We consider three way of distributing data to the clients: IID, non-IID and unbalanced data size, which we will explain in the following.
\begin{itemize}
  \item IID data: For MNIST, the dataset is evenly distributed to the clients with regards to the categories, i.e., all clients have the same amount of images from each class. For FEMNIST, we use the IID partition in \cite{FEMNIST}. We only reserve 500 { images} for each client to ensure that all clients have the same amount of data. We finally get 89 clients with 500 { images} on each client and a testing dataset of 5500 { images}.
  { To partition the CelebA dataset, we use the same way with FEMNIST, and get 100 clients with 500 images on each client and a testing dataset of 5000 images.}
  \item non-IID data: For MNIST, we divide the data from each class into $MN_c/10$ subdatasets. Then we distributed a random set of $N_c$ among the total $MN_c$ subdatasets to each client. We note that, in the case, each client ends up with no more than $N_c$ classes of images in the local dataset, and the total amount of images on each client is identical. For FEMNIST, we use the non-IID partition in \cite{FEMNIST}. Like in IID data, we only reserve 500 { images} for each client to ensure that all clients have the same amount of data. We finally get 89 clients with 500 { images} on each client and a testing dataset of 5500 { image}s.
  { To partition the CelebA dataset, we also use the same way with FEMNIST, and get 100 clients with 500 images on each client and a testing dataset of 5000 images.}
  \item Unbalanced data size: For MNIST, clients have different sizes of local datasets, while the amount of images from each category at each client is the same, where $20$ clients among $100$ evenly share 40\% of the dataset, another $40$ share another 40\% of the dataset, whereas the remaining $40$ share the remaining 20\% of the dataset.  For FEMNIST, we firstly divide the dataset into 178 subdatasets in an IID manner. We then distribute them on 89 clients in three categories similarly, and the clients in each category get 1, 2, and 4 subdatasets respectively. The amounts of clients in each category are 36, 35, and 18.
\end{itemize}
\subsubsection{Basic configuration}
We use the following configuration in our experiments.

\begin{itemize}
  \item Total amount of clients: $M=100$ for MNIST and $M=89$ for FEMNIST by default.
  \item Learning rate: {Decaying learning rate, \{0.005, 0.002, 0.001\} for MNIST, \{0.0003, 0.0001, 0.00005\} for FEMNIST{ , and \{0.1, 0.05, 0.02\}for CelebA}. The learning rate is decayed after the 30th and 60th epochs, respectively.}
  \item Gradient descent algorithm: Adam\cite{Adam}, a widely used gradient based optimizer.
  \item Batch size: $B=64$.
  \item Hyperparameter $\alpha$ in BiFL-BiML: 1.25 for MNIST and 1.75 for FEMNIST, since they leads to the best performance in terms of accuracy.
\end{itemize}
\subsection{Performance with IID Data}
In this section, we compare all the algorithms and evaluate the effect of the scaling parameter $\alpha$ on the IID data distribution.

\subsubsection{Performance comparison of different algorithms}
We compare the performance of all different BiFL implements in term of the accuracy and also the four benchmarks introduced above in Fig. \ref{fig::iid}. In Fig. \ref{fig::iid}, the shadows showing the standard deviation scaled by 1/5. We note that the training processes of FL and BiFL-Full are not shown completely in the figures of communication costs.
All the algorithms are evaluated on the same IID distributed datasets. We set $\beta=0.3$ for both BiFL-Bi-UpOnly and BiFL-Bi-UpDown, as this values leads to the best performance of both algorithms in terms of accuracy. We also include the numerical result of BiFL-Bi-UpDown with $\beta=1$ as it is a straightforward extension of the algorithm proposed in \cite{TC}. We also list the accuracy and the required communication costs by different algorithms for both uploading and downloading in Table \ref{table::iid}. All the data is averaged over 5 independently initialized dataset partitions. Note that the required downloading cost per parameter of our algorithm is only $\log(M+1)$ bit due to the fact that there are only $M+1$ values of the aggregated parameters as there are $M$ clients, which may increase if the dataset is not IID or there are more clients, but no more than 32 bit.

\begin{figure*} [!htp]
    \centering
       \includegraphics[width=0.95\linewidth]{./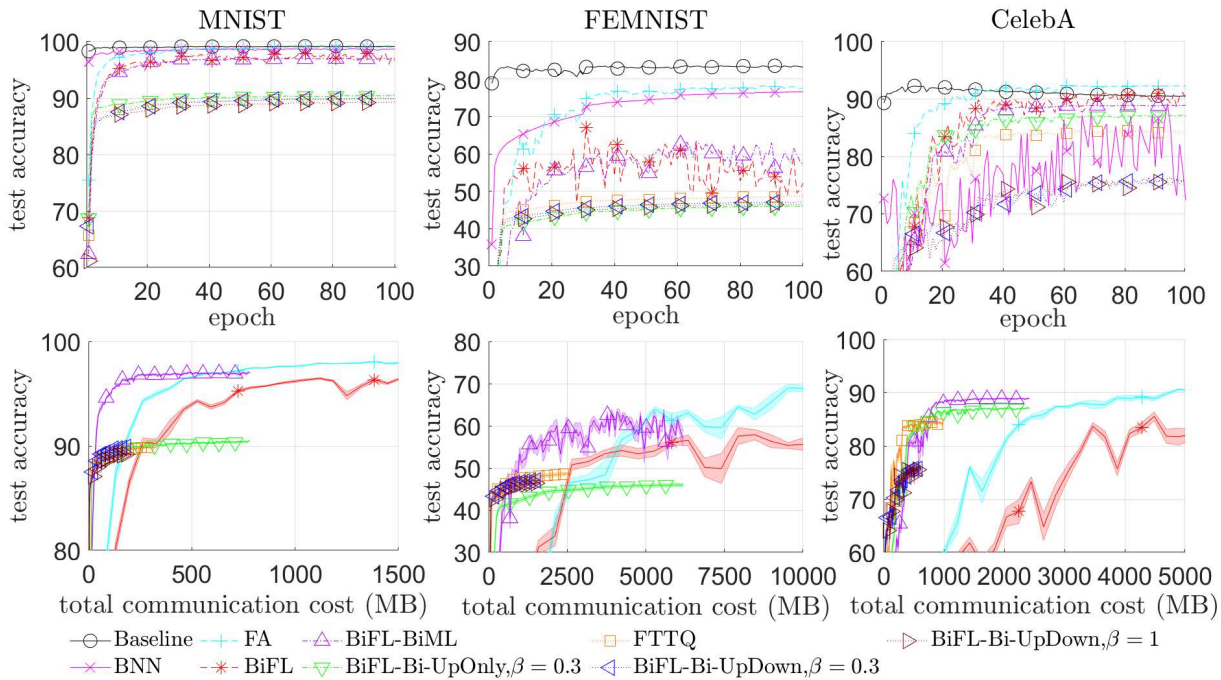}
	  \caption{ Accuracy by different algorithms on IID data distribution.}
	  \label{fig::iid}
\end{figure*}
\begin{center}

\begin{table*}[htbp]
\centering
\setlength{\tabcolsep}{1mm}{
\begin{tabular}{c|cc|cc|cc}
\multirow{2}{*}{algorithm} & \multicolumn{2}{c|}{MNIST} & \multicolumn{2}{c|}{FEMNIST}   & \multicolumn{2}{c}{CelebA}                                                                   \\ \cline{2-7}
                           & \begin{tabular}[c]{@{}c@{}}accuracy\\ (\%)\end{tabular} & \begin{tabular}[c]{@{}c@{}}upload/download\\ (MB per epoch)\end{tabular} &
                           \begin{tabular}[c]{@{}c@{}}accuracy\\ (\%)\end{tabular} & \begin{tabular}[c]{@{}c@{}}upload/download\\ (MB per epoch)\end{tabular} &
                           \begin{tabular}[c]{@{}c@{}}accuracy\\ (\%)\end{tabular} & \begin{tabular}[c]{@{}c@{}}upload/download\\ (MB per epoch)\end{tabular} \\ \hline
baseline                   & 99.25$\pm$0.04                                          & -                                                                        & 84.01$\pm$1.59                                          & -             &   92.27$\pm$0.23                                    & -                                                             \\
FA                         & 98.88$\pm$0.35                                          & 32.9/32.9                                                                & 76.65$\pm$0.15                                               & 264.3/264.3   &   89.19$\pm$0.84                                     & 99.2/99.2                                                             \\
BNN                        & 99.03$\pm$0.08                                          & -                                                                        &   78.53$\pm$2.99                                         & -           &   92.34$\pm$0.23                                     & -                                                               \\
BiFL-Full                & 98.24$\pm$0.07                                          & 32.9/32.9                                                                &  71.61$\pm$4.16                                       & 264.3/264.3     &   91.25$\pm$0.33                                    & 99.2/99.2                                                           \\
BiFL-BiML              & 97.17$\pm$0.21                                          & 1.0/6.8                                                                  &   66.19$\pm$2.83                                        & 8.3/53.6      &   88.97$\pm$0.39                                    & 3.1/21.1                                                             \\
BiFL-Bi-UpOnly, $\beta=0.3$     & 90.51$\pm$0.36                                          & 1.0/6.8                                                                  &  46.12$\pm$1.94                                        & 8.3/53.6    &   87.20$\pm$0.33                                    & 3.1/21.1                                                               \\
FTTQ                       & 89.99$\pm$0.28                                          & 1.6/1.6                                                                  &  48.81$\pm$1.99                                        & 13.1/13.1      &   84.50$\pm$1.05                                    & 5.0/5.0                                                            \\
BiFL-Bi-UpDown, $\beta=0.3$    & 90.09$\pm$1.48                                          & 1.0/1.0                                                                  &   47.27$\pm$2.44                                    & 8.3/8.3      &   76.38$\pm$1.01                                     & 3.1/3.1                                                             \\
BiFL-Bi-UpDown, $\beta=1$      & 89.41$\pm$0.55                                          & 1.0/1.0                                                                  &   46.65$\pm$1.98                                      & 8.3/8.3   &   76.02$\pm$2.86                                     & 3.1/3.1                                                                \\ \hline
\end{tabular}}
\caption{Accuracy and communication costs by different algorithms on IID data distribution.}
\label{table::iid}
\end{table*}
\end{center}

We can see that the centralized algorithms, i.e., the baseline and BNN, converges the fastest and results in the better accuracy than the federated learning algorithms as expected, however, there is a certain loss caused by binarization. We can also find a certain gap on the accuracy between all the distributed algorithms and corresponding centralized algorithms. It can be observed that the performance of our BiFL-BiML algorithm is { not too worse than} the performance of FA and BiFL-Full and exceeds the other three distributed algorithms, i.e., BiFL-Bi-UpOnly, FTTQ, and BiFL-Bi-UpDown, while reducing the uploading communication cost from 32 bits to 1 bit. We notice that BiFL-BiML also converges faster than the other three communication effecient FL algorithms. { We note that BiFL-Bi-UpDown and FTTQ require smaller download communication cost, however, the difference is neglectable due to the broadcasting, especially when the number of clients is big. The better performance of the BiFL-BiML algorithm in terms of accuracy demonstrates the significance of updating the local model with high precision aggregated parameters.}  Since there is still a certain gap in accuracy between BiFL-BiML and FL, we can use a simple hybrid method to overcome this, which will be discussed later.

\subsubsection{Effect of the scaling parameter}
We compare the performance in terms of accuracy when different values of scaling parameter $\alpha$ are used on MNIST, as shown in Fig. \ref{fig::alpha}. The result demonstrates that with a larger $\alpha$, our algorithm results in a lower accuracy, which validates our analysis. It also demonstrates that our algorithm may not converge to a good result when $\alpha$ is very small.
\begin{figure} [!htp]
\vspace{-0.15 cm}
    \centering
       \includegraphics[width=0.9\linewidth]{./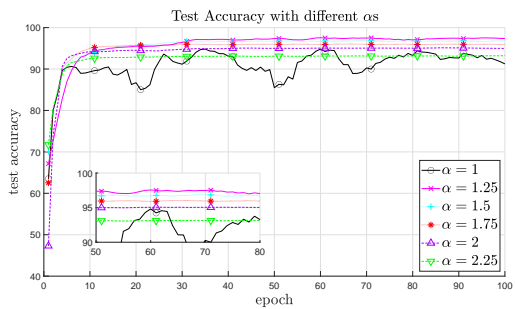}
	  \caption{Accuracy by BiFL-BiML with different values of $\alpha$.}
	  \label{fig::alpha}
\vspace{-0.3 cm}
\end{figure}
\subsubsection{Performance on hybrid algorithm}
\begin{figure} [!htp]
\vspace{-0.15 cm}
    \centering
       \includegraphics[width=\linewidth]{./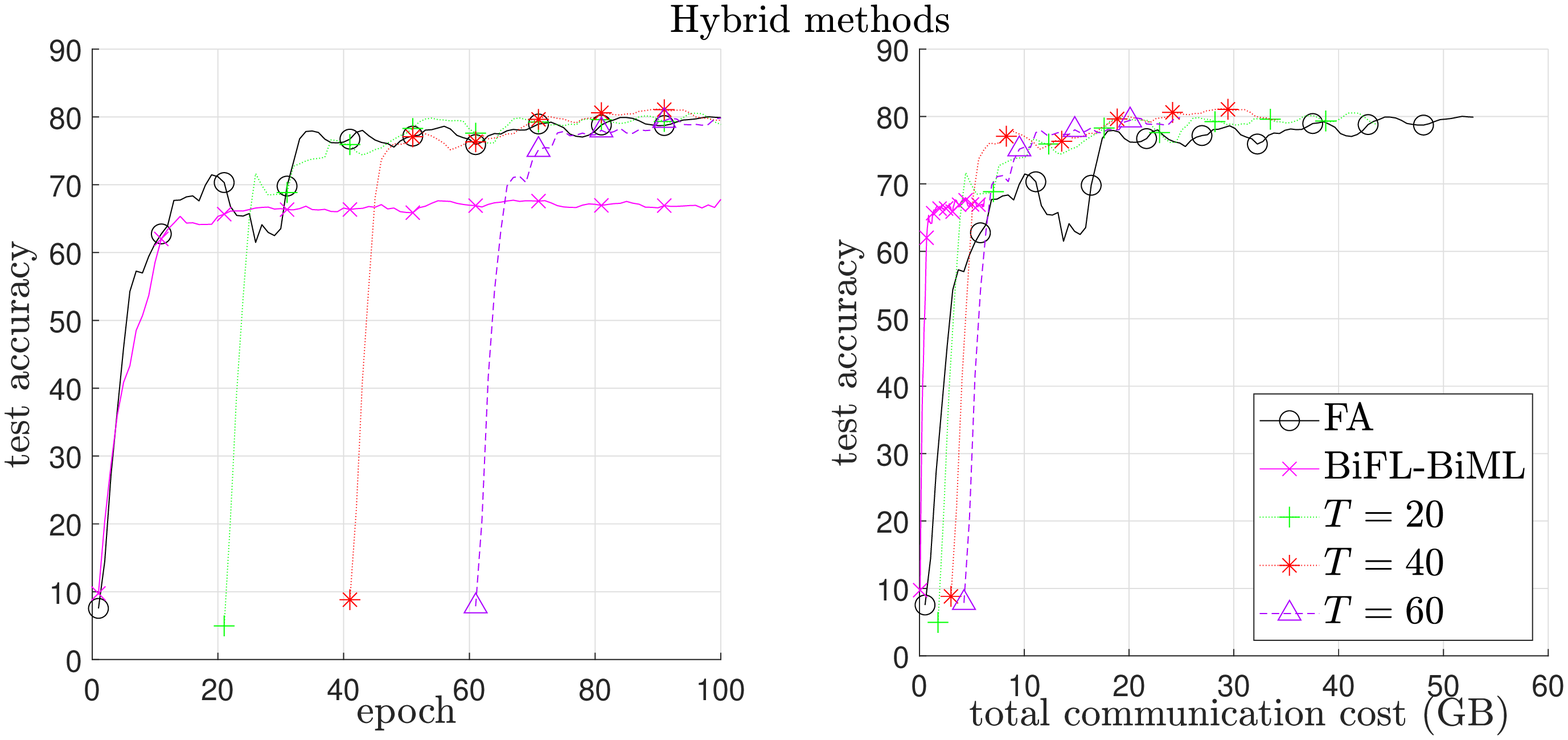}
	  \caption{Accuracy by the hybrid algorithm with different values of $T$ { on IID data distribution}.}
	  \label{fig::hybrid}
\vspace{-0.3 cm}
\end{figure}
{ Although our BiFL-BiML algorithm performs well in terms of accuracy comparing with other quantization schemes, there is still a notable performance gap comparing with that of FA, which is more significant with the FEMNIST dataset. To achieve a closer performance to FA and maintain a low communication cost at the same time, we train the NN in a hybrid way, in particular, by BiFL-BiML for the initial $T$ epochs, and by FA for the rest. We show the results on FEMNIST} when $T=\{20, 40, 60\}$ in Fig. \ref{fig::hybrid}. From Fig. \ref{fig::hybrid}, we can observe that the hybrid algorithm can achieve the performance of FA with lower communication cost. We also note that the test accuracy drops significantly at time $T$ when switching the algorithm, but converges to the training curve of FA in a short time. { This is due to the forward pass switching from using the binary parameters to using the real value parameter at time $T$. Due to the difference in the datasets on different clients, the models may also differs at the beginning, which is similar to the initialization of FA.} It can also be seen that a smaller $T$ results in faster convergence as expected with slightly larger communication cost for the same test accuracy.
\subsection{Effect of the Number of Clients}
We compare the result derived by the proposed algorithm and the FA algorithm when the number of clients in the system is different. In particular, we set the number of clients to be half, the same, and double of that of the IID settings. We present the smoothed results in Fig. \ref{fig::M}.
We can observe from Fig. \ref{fig::M} that the performance difference caused by the different amounts of clients is insignificant for all algorithms.
\begin{figure} [!htp]
\vspace{-0.15 cm}
    \centering
       \includegraphics[width=\linewidth]{./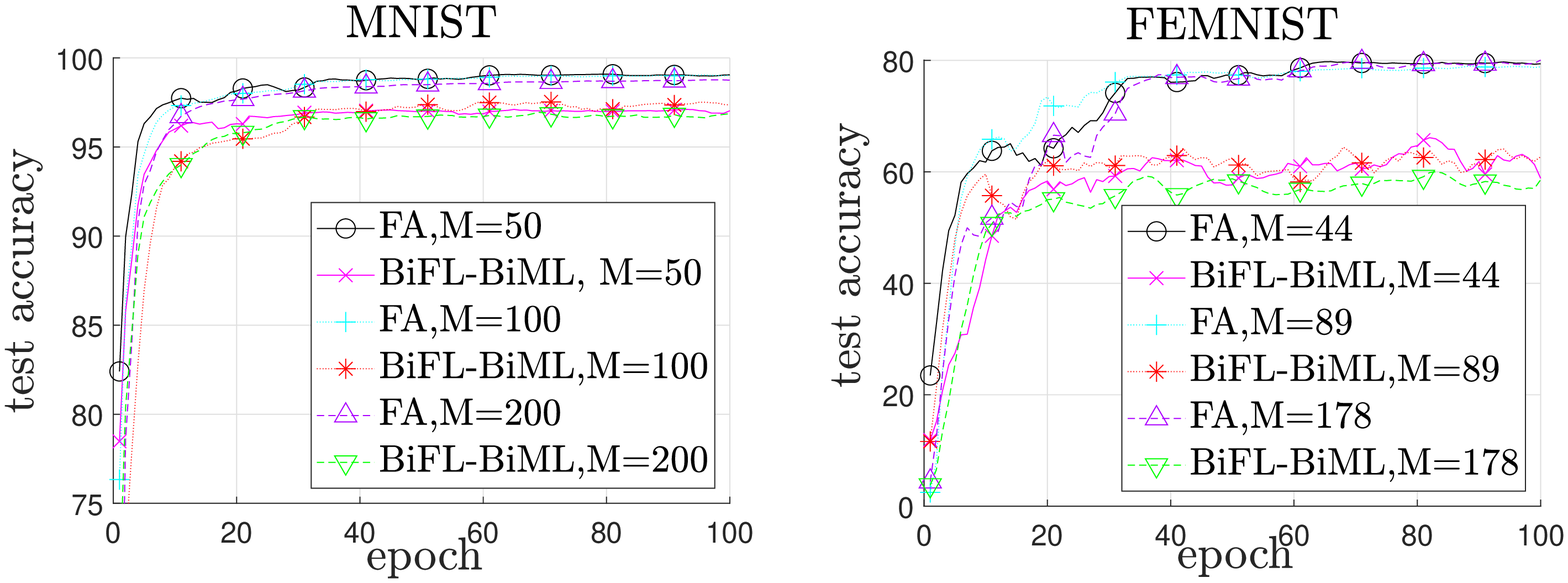}
	  \caption{Testing accuracy by BiFL-BiML and FA with different amounts of clients.}
	  \label{fig::M}
\vspace{-0.3 cm}
\end{figure}

\subsection{Performance with Non-IID Data}
\begin{table*}[]
\centering
\begin{tabular}{c|c|c|c|c}
algorithm     & MNIST,$N_c=5$  & MNIST,$N_c=3$  & FEMNIST & CelebA\\ \hline
FA            & 95.50$\pm$0.61 & 93.08$\pm$4.61 & 72.05$\pm$1.22    & 82.91$\pm$0.15          \\
BiFL-BiML & 86.00$\pm$1.87 & 90.05$\pm$3.87 & 60.38$\pm$2.38        & 81.09$\pm$0.20      \\
FTTQ          & 72.32$\pm$2.66 & 73.10$\pm$1.19 & 45.11$\pm$2.04    & 76.36$\pm$0.92          \\
BiFL-Bi-UpDown    & 73.39$\pm$2.13 & 73.26$\pm$4.28 & 44.19$\pm$3.22    & 68.51$\pm$1.14 \\\hline
\end{tabular}
\caption{ Accuracy by different algorithms on non-IID data distributions (in terms of \%).}
\label{table::niid}
\end{table*}
We compare different algorithm when the data is distributed to the clients in a non-IID manner as described above shown in Table \ref{table::niid}. All the results are derived by 5 independent experiments with independently initialized dataset partitions. We recall that the $N_c$ is the largest number of data classes each client can have according to our data partition method.
 { From Table \ref{table::niid}, it can be observed that the proposed BiFL-BiML algorithm outperform the FTTQ and the BiFL-Bi-UpOnly algorithms in terms of accuracy even on non-IID data, whereas a remarkable performance gap to that of FA still presents.}
\subsection{Effect of Unbalanced Data Size}
\begin{figure} [!htp]
\vspace{-0.15 cm}
    \centering
       \includegraphics[width=\linewidth]{./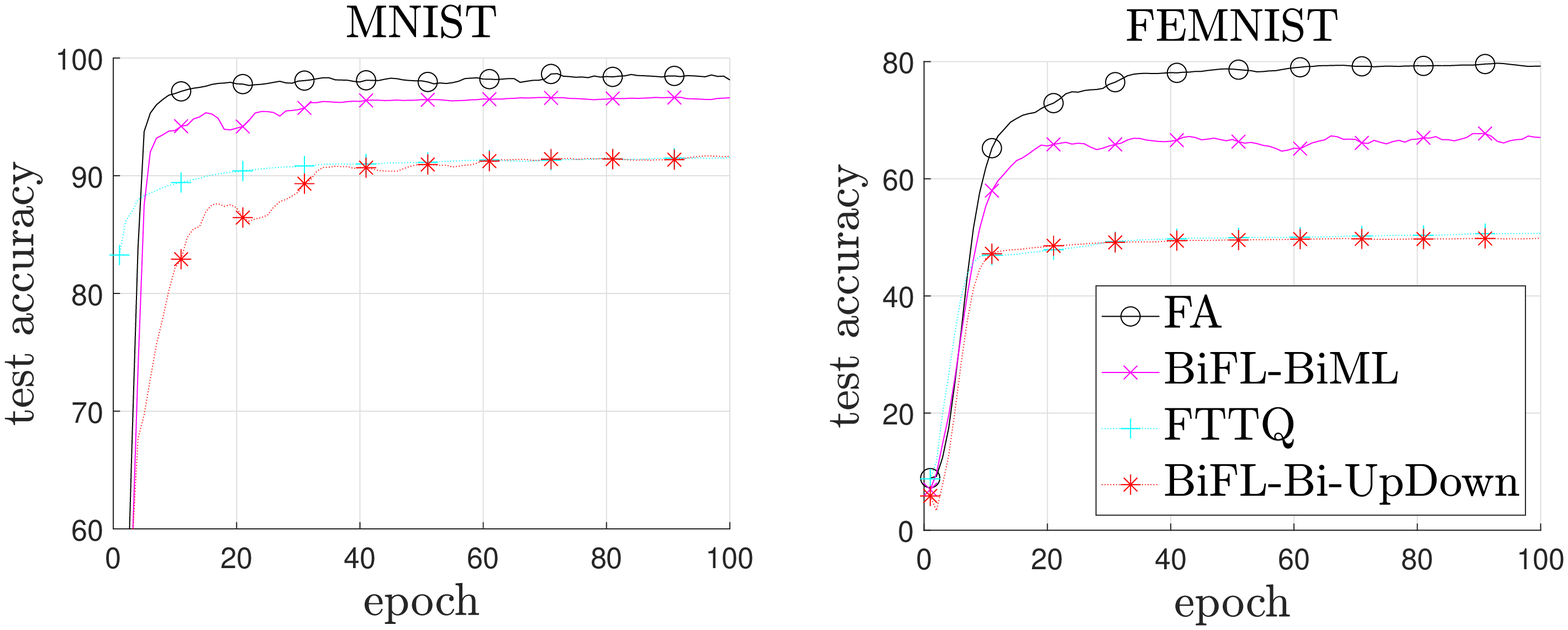}
	  \caption{Accuracy by different algorithms on unbalanced data size}
	  \label{fig::size}
\vspace{-0.3 cm}
\end{figure}
We evaluate the performance of the distributed algorithms when the datasets at clients are of different sizes. In particular, the data is allocated to each client in the way that is described above. As shown in Fig. \ref{fig::size}, we can find that the performance gap between the our BiFL-BiML algorithm and the FA algorithm is still small but more notable than the case with identical dataset size. Also note that BiFL-BiML algorithm still significantly outperform the FTTQ and BiFL-Bi-UpOnly algorithms.

\subsection{Effect of Partial Participation}
\begin{figure} [!htp]
\vspace{-0.15 cm}
    \centering
       \includegraphics[width=0.48\linewidth]{./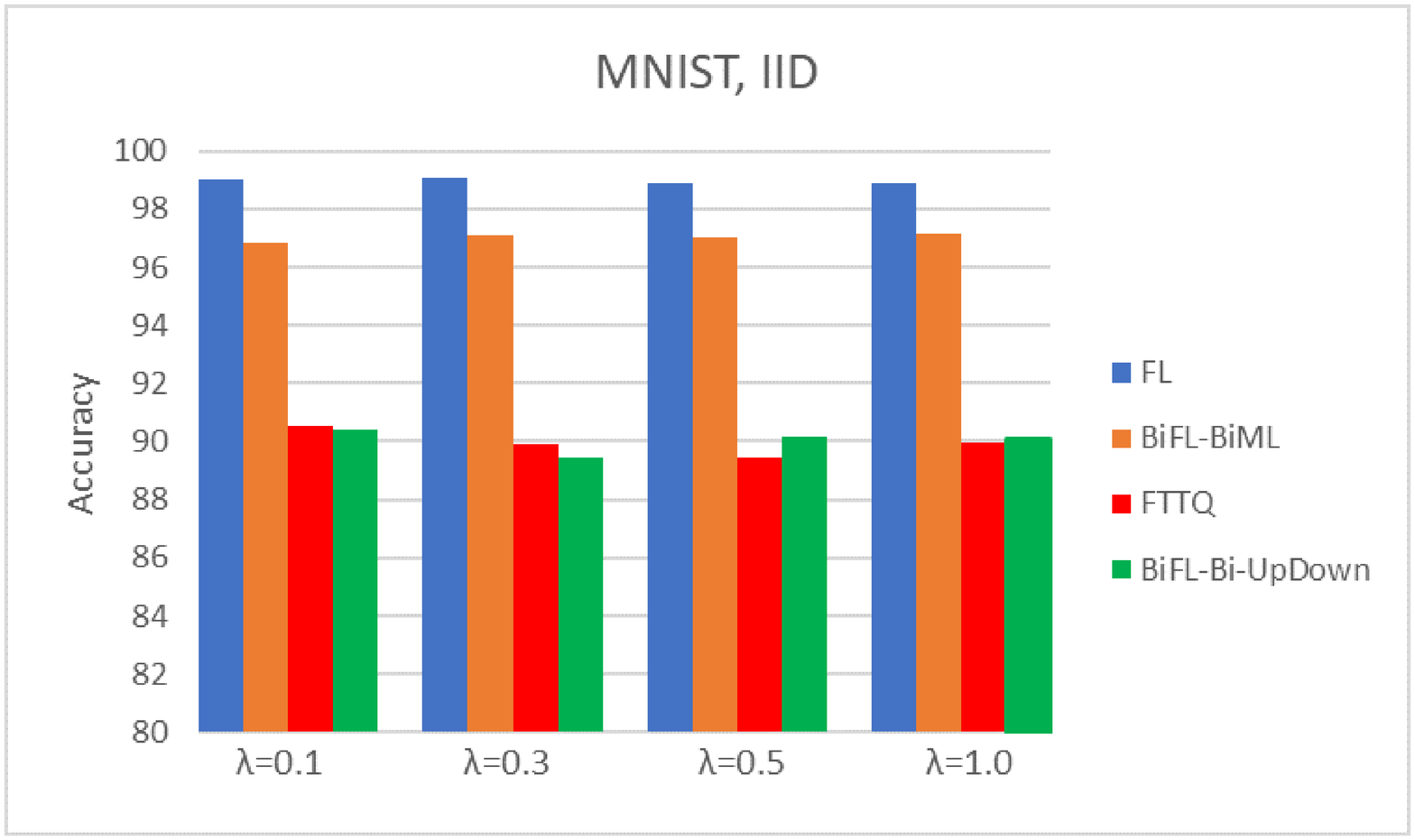}
       \includegraphics[width=0.48\linewidth]{./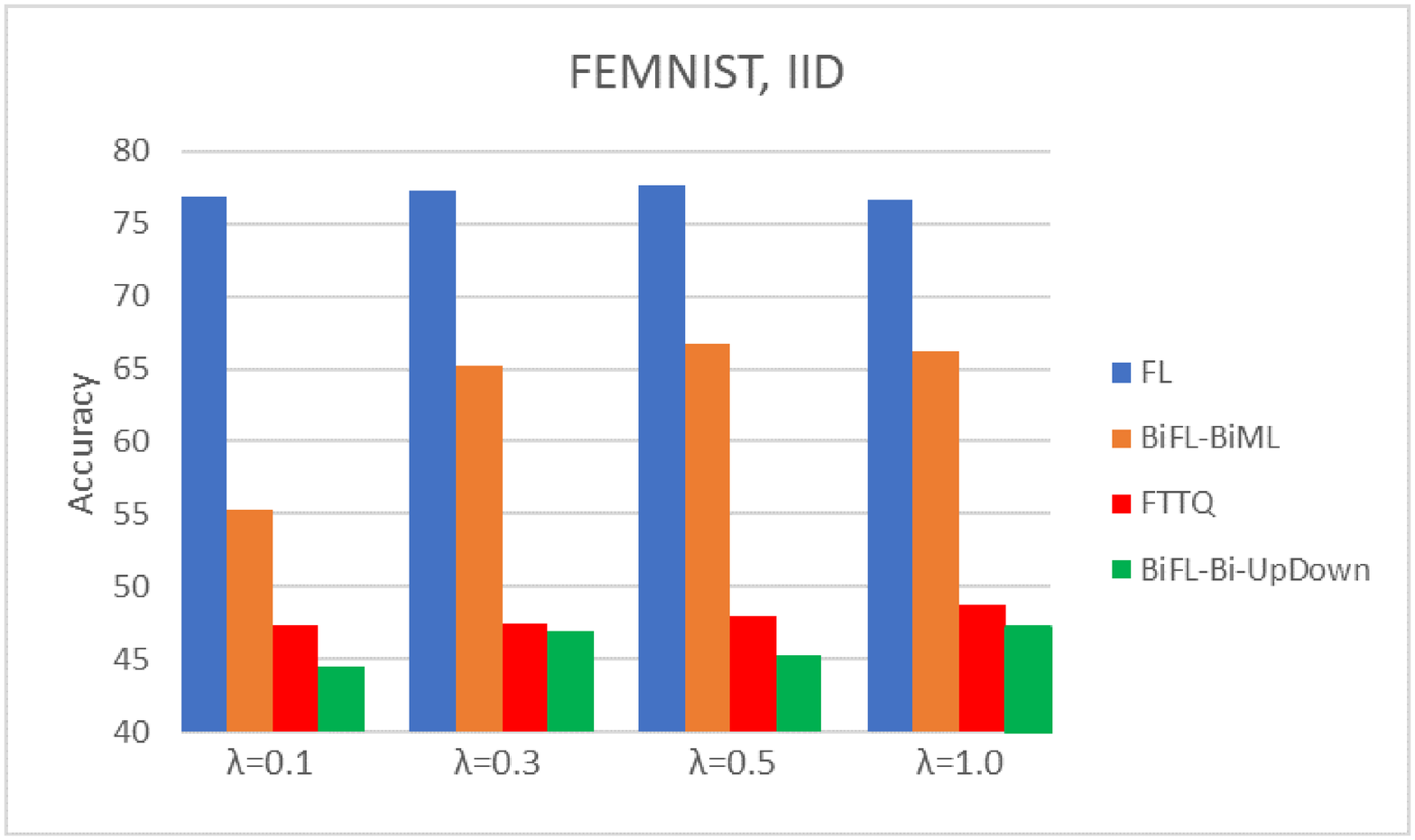}

       \includegraphics[width=0.48\linewidth]{./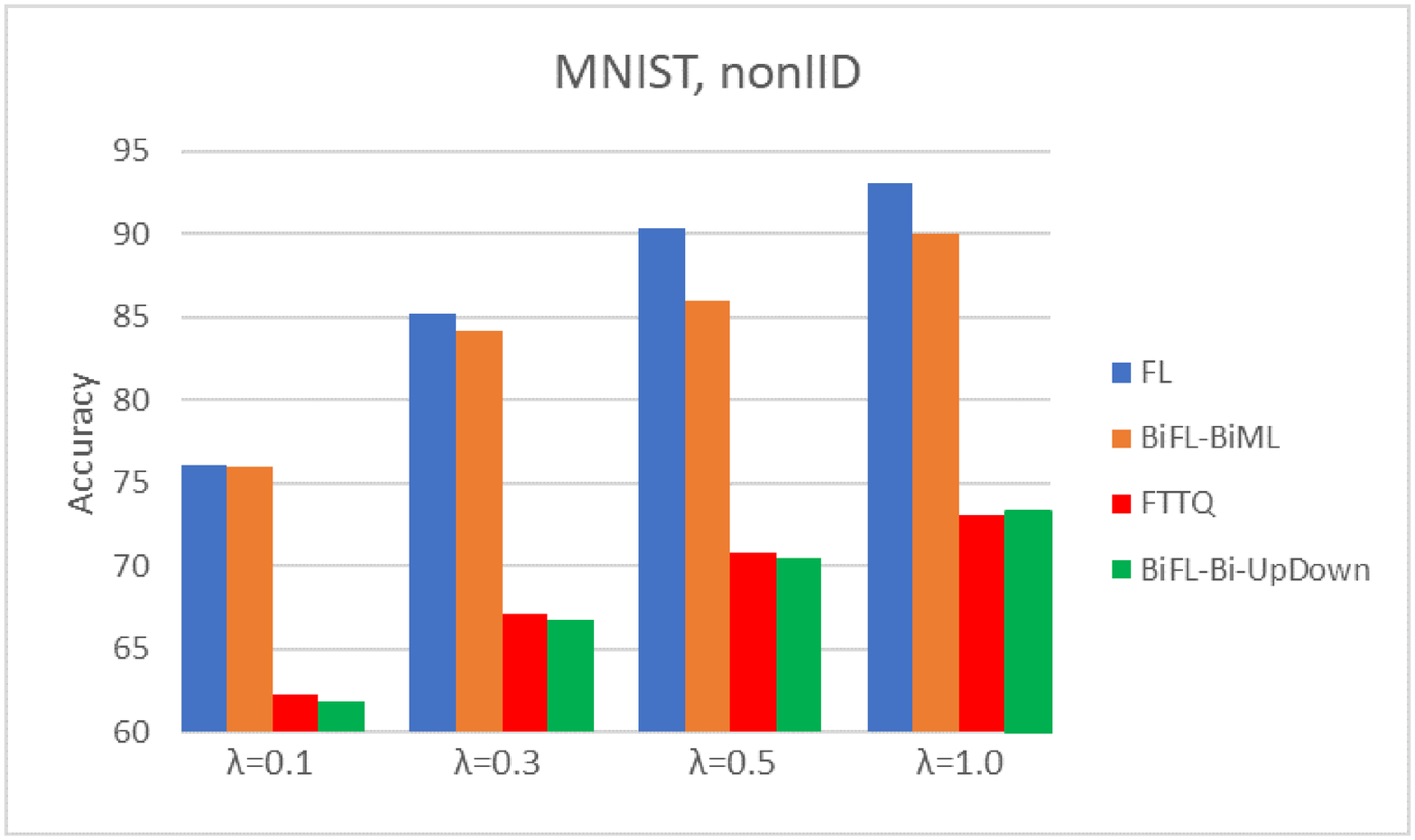}
       \includegraphics[width=0.48\linewidth]{./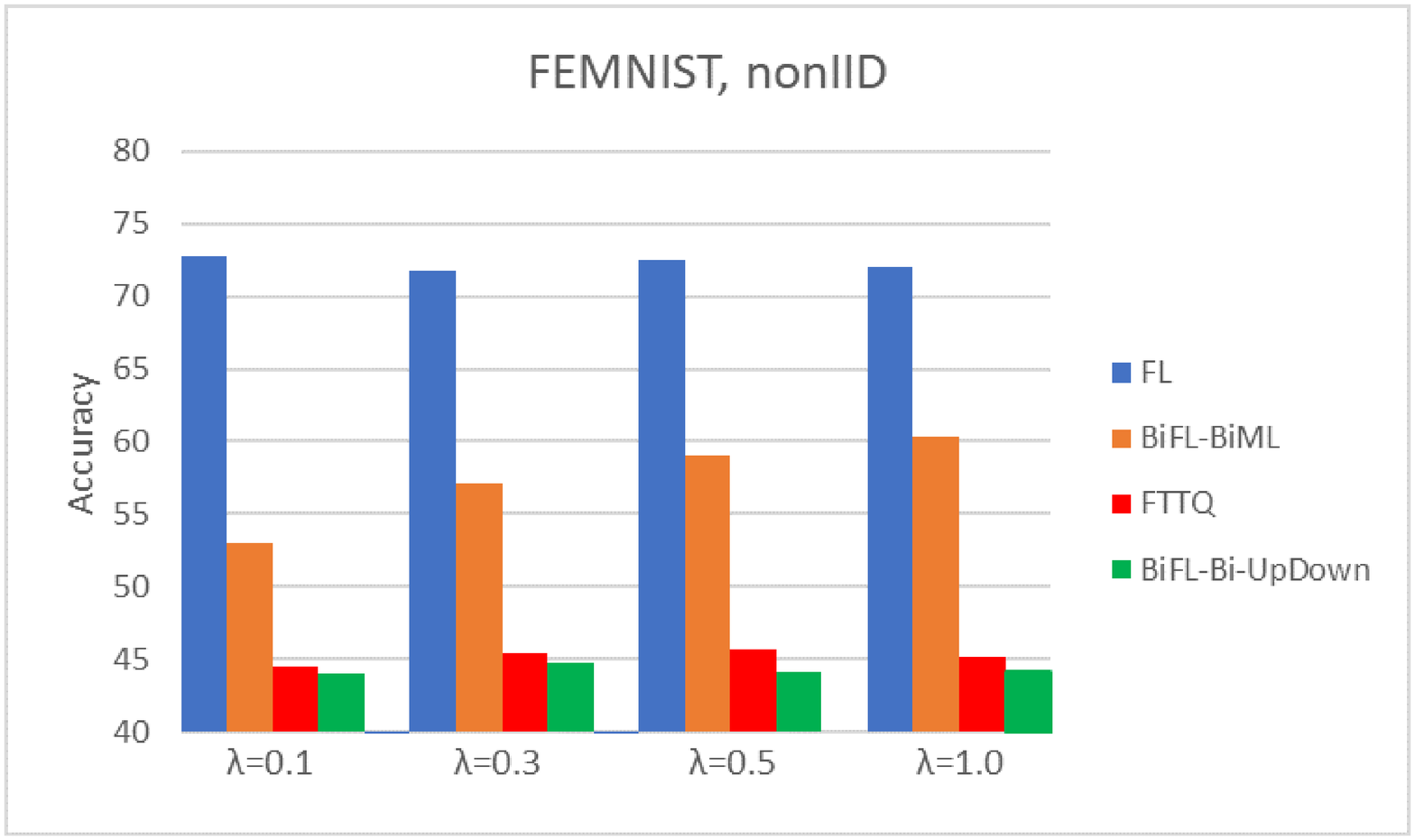}
	  \caption{Performance comparison with different participation rates.}
	  \label{fig::lambda}
\vspace{-0.3 cm}
\end{figure}
Here, we evaluate the performance of the distributed algorithms when not all the clients upload at each iteration. We denote the participation rate of the system as $\lambda$, i.e., only $\lambda M$ clients upload their parameters at each time slot. {  We show the results for both IID and non-IID data distributions in Fig. \ref{fig::lambda}. For MNIST, non-IID case, we set $N_c$=3. As we can see in Fig. \ref{fig::lambda}, from the results on MNIST, the participation rate of clients has marginal impact on the performance in terms of accuracy when using IID data distributions, whereas causing a notable drop in performance when $\lambda$ is small and using non-IID data distributions. From the results on FEMNIST, we can find that only the BiFL-BiML has obvious performance drop. This means that the data distribution of FEMNIST is not too far away from the IID data distribution, and that the BiFL-BiML is more sensible of partial participation mainly because the ML-PU is very sensible of the amount of the uploaded sets of parameters.}
\section{Conclusions and future directions}\label{sec::con}
In this paper, we proposed a novel federated learning algorithm, BiFL-BiML, to train neural networks with binary parameters. This algorithm introduces significant reduction in communication cost as the client uploads binary parameters instead of 32-bit float point parameters as in conventional federated learning systems. It employs a maximum likelihood estimation based parameter updating approach, ML-PU, in which each client estimates the auxiliary real-valued parameters based on the binary models' aggregation result received from the server and then updates its local parameters. To the best of our knowledge, we are the first to have theoretically derived the conditions on the convergence of training binary models by gradient descent algorithm. We numerically compared the proposed BiFL-BiML with the conventional FL algorithm, a communication-efficient FL algorithm for ternary neural works called FTTQ\cite{TC}, and two trivial implementations of FL learning with BNN, i.e., BiFL-Bi-UpOnly and BiFL-Bi-UpDown, respectively, which utilize simpler parameter updating mechanisms instead of ML-PU. The numerical results revealed that BiFL-BiML achieves performance close to the conventional FL algorithm while providing drastic reduction of communication cost, and significantly outperforms FTTQ, BiFL-Bi-UpOnly, and BiFL-Bi-UpDown under all the IID data, non-IID data, and unequal dataset size scenarios. The numerical results also revealed that the neural network can achieve nearly the same accuracy with a reduction of communication cost by a hybrid implementation of BiFL-BiML and the conventional FL algorithms.

\appendix
\begin{appendices}
\subsection{Proof of Lemma \ref{lemma::1}}\label{proof_lemma::1}
We first prove 1). Denote $\overline{w}_i$, $w_i^b$ and $v_i^b$ as the $i$th element of $\overline{\bm{W}}$, $\bm{W}^b$ and $\bm{V}^b$, respectively. Define $\mathcal{I}\triangleq\{i| w^b_i\neq v^b_i, 0<i\leq N\}$. $\mathcal{I}$ has $K$ elements in total. Recall that the real-valued parameters $\overline{\bm{W}}$ are clipped between -1 and 1, i.e. $-1\leq \overline{w}_i\leq 1$. We can find that for any $i\in\mathcal{I}$, $w^b_i$ and $\overline{w}_i$ have the same sign but the opposite to $v^b_i$. Therefore, we have $1/2|w^b_i-v^b_i|=1\leq|\overline{w}_i-v^b_i|<2$. We also have for any $i\notin\mathcal{I}$, $1/2|w^b_i-v^b_i|=0\leq|\overline{w}_i-v^b_i|$. Combine both cases and it yields  $1/2\|\bm{W}^b-\bm{V}^b\|\leq\|\overline{\bm{W}}-\bm{V}^b\|$, which completes the proof of conclusion 1) in Lemma \ref{lemma::1}.

We prove 2) next. we have
\begin{equation}
\begin{aligned}
(\bm{W}^b-\bm{V}^b)^T(\overline{\bm{W}}-\bm{V}^b)&=\sum_{i=1}^{N}(w^b_i-v^b_i)(\overline{w}_i-v^b_i)\\
&=\sum_{i\in\mathcal{I}}(w^b_i-v^b_i)(\overline{w}_i-v^b_i)\\
\end{aligned}
\end{equation}
Notice that $(w^b_i-v^b_i)(\overline{w}_i-v^b_i)>0$. Hence, we can apply Cauchy-Schwarz inequality, which yields
\begin{equation}\label{lemma2_1}
\begin{aligned}
(\bm{W}^b-\bm{V}^b)^T(&\overline{\bm{W}}-\bm{V}^b)\geq \sqrt{\sum_{i\in\mathcal{I}}(w^b_i-v^b_i)^2\sum_{i\in\mathcal{I}}(\overline{w}_i-v^b_i)^2}
\end{aligned}
\end{equation}
Also notice that
\begin{equation}\begin{aligned}
|\overline{w}_i-v^b_i|&=|w^b_i-v^b_i|-|\overline{w}_i-w^b_i|\\
&=2-|\overline{w}_i-w^b_i|\geq1, \forall i\in\mathcal{I}
\end{aligned}
\end{equation}
and
\begin{equation}
|\overline{w}_i-v^b_i|=|\overline{w}_i-w^b_i|\leq1, \forall i\notin\mathcal{I}.
\end{equation}
We have
\begin{equation}\label{lemma2_2}
\begin{aligned}
&\frac{N}{K}\sum_{i\in\mathcal{I}}(\overline{w}_i-v^b_i)^2\\
&=\sum_{i\in\mathcal{I}}(\overline{w}_i-v^b_i)^2+\frac{N-K}{K}\sum_{i\in\mathcal{I}}(\overline{w}_i-v^b_i)^2\\
&\geq\sum_{i\in\mathcal{I}}(\overline{w}_i-v^b_i)^2+N-K\\
&\geq\sum_{i\in\mathcal{I}}(\overline{w}_i-v^b_i)^2+\sum_{i\notin\mathcal{I}}(\overline{w}_i-v^b_i)^2\\
&=\|\overline{\bm{W}}-\bm{V}^b\|^2
\end{aligned}
\end{equation}
Bring \eqref{lemma2_2} to \eqref{lemma2_1}, it yields
\begin{equation}\label{lemma2_3}
(\bm{W}^b-\bm{V}^b)^T(\overline{\bm{W}}-\bm{V}^b)\geq \sqrt{\frac{K}{N}\|\bm{W}^b_i-\bm{V}^b\|^2\|\overline{\bm{W}}-\bm{V}^b\|^2},
\end{equation}
which completes the proof of Lemma \ref{lemma::1}.
\subsection{Proof of Lemma \ref{lemma::2}}\label{proof_lemma::2}
By applying Lemma 1, and we have
\begin{equation}\label{lemma3_1}\begin{aligned}
F(\bm{W}^b_t)-F(\bm{W}^*)\leq&
\nabla F(\bm{W}^b_t)^T(\bm{W}^b_t-\bm{W}^*)\\
&-\frac{1}{2\beta}\|\nabla F(\bm{W}^b_t)-\nabla F(\bm{W}^*)\|^2,
\end{aligned}
\end{equation}
\begin{equation}\label{lemma3_2}\begin{aligned}
F(\bm{W}^*)-F(\bm{W}^b_t)\leq&\nabla F(\bm{W}^*)^T(\bm{W}^*-\bm{W}^b_t)\\
&-\frac{1}{2\beta}\|\nabla F(\bm{W}^*)-\nabla F(\bm{W}^b_t)\|^2.
\end{aligned}
\end{equation}
Adding \eqref{lemma3_1} and \eqref{lemma3_2}, it yields
\begin{equation}\label{lemma3_3}\begin{aligned}
(\nabla F(\bm{W}^b_t)-\nabla F(\bm{W}^*))^T&(\bm{W}^b_t-\bm{W}^*)\geq\\
&\frac{1}{\beta}\|\nabla F(\bm{W}^b_t)-\nabla F(\bm{W}^*)\|^2.
\end{aligned}
\end{equation}
Since $F(\cdot)$ has $\xi$-strong convexity, we can obtain
\begin{equation}\label{lemma3_3_1}
\begin{aligned}
F(\bm{W}^b_t)-F(\bm{W}^*)\leq&\nabla F(\bm{W}^b_t)^T(\bm{W}^b_t-\bm{W}^*)\\
&-\frac{\xi}{2}\|\bm{W}^b_t-\bm{W}^*\|^2,
\end{aligned}
\end{equation}
\begin{equation}
\begin{aligned}
F(\bm{W}^*)-F(\bm{W}^b_t)\leq&\nabla F(\bm{W}^*)^T(\bm{W}^*-\bm{W}^b_t)\\
&-\frac{\xi}{2}\|\bm{W}^*-\bm{W}^b_t\|^2,
\end{aligned}
\end{equation}
and, hence,
\begin{equation}\label{lemma3_4}
(\nabla F(\bm{W}^b_t)-\nabla F(\bm{W}^*))^T(\bm{W}^b_t-\bm{W}^*)\geq\xi\|\bm{W}^b_t-\overline{\bm{W}}_t\|^2
\end{equation}
By combining \eqref{lemma3_3} and \eqref{lemma3_4}, we have
\begin{equation}\label{lemma3_5}\begin{aligned}
(\nabla F(\bm{W}^b_t)&-\nabla F(\bm{W}^*))^T(\bm{W}^b_t-\bm{W}^*)\geq\\
&\sqrt{\frac{\xi}{\beta}}\|\nabla F(\bm{W}^b_t)-\nabla F(\bm{W}^*)\|\|\bm{W}^b_t-\bm{W}^*\|,
\end{aligned}
\end{equation}
since the right side term of \eqref{lemma3_5} is the geometric mean of those of \eqref{lemma3_3} and \eqref{lemma3_4}. Note that $\cos\langle\bm{a},\bm{b}\rangle=\frac{\bm{a}^T\bm{b}}{\|\bm{a}\|\|\bm{b}\|}$, we can rewrite \eqref{lemma3_5} as
\begin{equation}\label{lemma3_6}
|\langle\nabla F(\bm{W}^b_t)-\nabla F(\bm{W}^*),\bm{W}^b_t-\bm{W}^*\rangle|\leq\arccos{\sqrt{\frac{\xi}{\beta}}}.
\end{equation}
By applying Lemma \ref{lemma::1} and \eqref{lemma3_6}, we have
\begin{equation}
\begin{aligned}
\label{lemma3_7}
&|\langle\nabla F(\bm{W}^b_t)-\nabla F(\bm{W}^*),\overline{\bm{W}}_t-\bm{W}^*\rangle|\\
&\leq|\langle\nabla F(\bm{W}^b_t)-\nabla F(\bm{W}^*),\bm{W}^b_t-\bm{W}^*\rangle|\\
&+|\langle\overline{\bm{W}}_t-\bm{W}^*,\bm{W}^b_t-\bm{W}^*\rangle|\\
&\leq\arccos{\sqrt{\frac{\xi}{\beta}}}+\arccos\sqrt{\frac{K}{N}}.
\end{aligned}
\end{equation}
Note that $\|\nabla F(\bm{W}^b_t)\| = \lambda_t\|\nabla F(\bm{W}^*)\|$. Hence,
\begin{equation}\begin{aligned}
\label{lemma3_8}
|\langle \nabla F(\bm{W}^b_t),\nabla F(\bm{W}^b_t)-&\nabla F(\bm{W}_t^*)\rangle| \leq \arctan \frac{1}{\lambda_t}\\
\frac{\lambda_t-1}{\lambda_t}\|\nabla F(\bm{W}^b_t)\| &\leq\|\nabla F(\bm{W}^b_t)-\nabla F(\bm{W}^*)\| \\
&\leq \frac{\lambda_t+1}{\lambda_t}\|\nabla F(\bm{W}^b_t)\|.
\end{aligned}
\end{equation}
By combining \eqref{lemma3_7} and \eqref{lemma3_8}, it yields
\begin{equation}
\begin{aligned}
\label{lemma3_9}
&|\langle\nabla F(\bm{W}^b_t),\overline{\bm{W}}_t-\bm{W}^*\rangle|\\
&\leq|\langle\nabla F(\bm{W}^b_t),\overline{\bm{W}}_t-\bm{W}^*\rangle|\\
&+|\langle\nabla F(\bm{W}^b_t),\nabla F(\bm{W}^b_t)-\nabla F(\bm{W}^*)\rangle|\\
&=\arccos{\sqrt{\frac{\xi}{\beta}}}+\arccos\sqrt{\frac{K_t}{N}}+\arccos\frac{\lambda_t}{\sqrt{\lambda_t^2+1}}\\
&=\arccos\frac{\lambda_t}{\lambda_t-1}\phi(\lambda_t, K_t).
\end{aligned}
\end{equation}
From the definition of $\beta$-smooth, the property of $\epsilon$-strong convex, and \eqref{lemma3_8}, we can easily get
\begin{equation}\label{lemma3_9_1}
\frac{\xi\lambda_t}{\lambda_t+1}\leq
\frac{\|\nabla F(\bm{W}_t^b)\|}{\|\bm{W}_t^b-\bm{W}^*\|}
\leq\frac{\beta\lambda_t}{\lambda_t-1},
\end{equation}
which can be easily transformed to the conclusion 1) in Lemma \ref{lemma::2}.

Now, recall that $\|\bm{W}^b_t-\bm{W}^*\|=2\sqrt{K_t}$, the distance between $\overline{\bm{W}}_{t+1}$ and $\bm{W}^*$ is given as follows
\begin{equation}\begin{aligned}\label{lemma3_10}
&\|\overline{\bm{W}}_{t+1}-\bm{W}^*\|^2\\
=& \|\overline{\bm{W}}_t-\eta\nabla F(\bm{W}^b_t)-\bm{W}^*\|^2\\
=&\|\overline{\bm{W}}_t-\bm{W}^*\|^2+\eta^2\|\nabla F(\bm{W}^b_t)\|^2\\
&-2\eta\nabla F(\bm{W}^b_t)^T(\overline{\bm{W}}_t-\bm{W}^*)\\
=&\|\overline{\bm{W}}_t-\bm{W}^*\|^2+\eta^2\|\nabla F(\bm{W}^b_t)\|^2-2\eta\|\nabla F(\bm{W}^b_t)\|*\\
&\|\overline{\bm{W}}_t-\bm{W}^*\|\cos\langle\nabla F(\bm{W}^b_t),\overline{\bm{W}}_t-\bm{W}^*\rangle\\
\overset{a}{<}&\|\overline{\bm{W}}_t-\bm{W}^*\|^2+\eta^2\|\nabla F(\bm{W}^b_t)\|^2\\
&-\frac{2\phi(\lambda_t,K_t)\eta\beta\lambda_t}{\lambda_t-1}\|\nabla F(\bm{W}^b_t)\|\|\overline{\bm{W}}_t-\bm{W}^*\|\\
\overset{b}{\leq}&\|\overline{\bm{W}}_t-\bm{W}^*\|^2+\eta^2\|\nabla F(\bm{W}^b_t)\|^2\\
&-\frac{\phi(\lambda_t,K_t)\eta\beta\lambda_t}{\lambda_t-1}\|\nabla F(\bm{W}^b_t)\|\|\bm{W}^b_t-\bm{W}^*\|\\
\overset{c}{\leq}&\|\overline{\bm{W}}_t-\bm{W}^*\|^2-\frac{(\phi(\lambda_t,K_t)-\eta)\eta\beta\lambda_t}{\lambda_t-1}*\\
&\|\nabla F(\bm{W}^b_t)\|\|\bm{W}^b_t-\bm{W}^*\|\\
\overset{d}{\leq}&\|\overline{\bm{W}}_t-\bm{W}^*\|^2-\frac{4(\phi(\lambda_t,K_t)-\eta)\eta\beta\xi\lambda_t^2 K_t}{\lambda_t^2-1}\\
<&\|\overline{\bm{W}}_t-\bm{W}^*\|^2-4(\phi(\lambda_t,K_t)-\eta)\eta\beta\xi K_t,
\end{aligned}
\end{equation}
where $a$ is derived from \eqref{lemma3_9} and the definition of $\delta$, $b$ is derived by applying Lemma \ref{lemma::1}, $c$ and $d$ comes from \eqref{lemma3_9_1}. The proof of conclusion 2) in lemma \ref{lemma::2} is completed.

Thus, the proof of Lemma \ref{lemma::2} has been completed.
\subsection{Proof of Theorem \ref{lemma::3}}\label{proof_lemma::3}
Recall that $K_t$ is the amount of different elements in $\bm{W}_t^b$ and $\bm{W}^*$. Obviously, when $K_t\geq K_0+\kappa$,
\begin{equation}
\|\overline{\bm{W}}_t-\bm{W}^*\|^2<K_t*2^2+(N-K_t)*1^2
<(3+N/K_0)K_t.
\end{equation}
From Lemma \ref{lemma::2},
\begin{equation}\begin{aligned}\label{lemma4_1}
&\|\overline{\bm{W}}_{t+1}-\bm{W}^*\|^2\\
<&\|\overline{\bm{W}}_t-\bm{W}^*\|^2-4(\phi(\lambda_t,K_t)-\eta)\eta\beta\xi K_t\\
<&(1-\frac{4(\phi(\lambda_0,K_t)-\eta)\eta\beta\xi}{3+N/K_0})\|\overline{\bm{W}}_t-\bm{W}^*\|^2\\
\triangleq&(1-C_1(\phi(\lambda_0,K_t)-\eta))\|\overline{\bm{W}}_t-\bm{W}^*\|^2\\
\leq&(1-C_1(\phi(\lambda_0, K_0+\kappa)-\phi(\lambda_0,K_0)))\|\overline{\bm{W}}_t-\bm{W}^*\|^2\\
\triangleq&(1-C_2)\|\overline{\bm{W}}_t-\bm{W}^*\|^2,
\end{aligned}
\end{equation}
where $C_1$ and $C_2$ are positive constants{, and $C_2<1$ since the right hand cannot be negative}.

{ We then discuss the upper bound of $T_0$. We recall that $\bm{W}^*$ is the set of the optimal binary parameters. From Lemma 2, $K_t=\|\bm{W}^b_t-\bm{W}^*\|/2\leq\|\overline{\bm{W}}_t-\bm{W}^*\|$, which means that if $\|\overline{\bm{W}}_t-\bm{W}^*\|<K_0+\kappa$, then $K_t<K_0+\kappa$. Hence, for $t_0=2(\log(K_0+\kappa)-\log(\|\overline{\bm{W}}_0-\bm{W}^*\|))/\log(1-C_2)$, if the constraint of Lemma \ref{lemma::2} always holds before $t_0$, we have $\|\overline{\bm{W}}_{t_0}-\bm{W}^*\|\overset{(\ref{lemma4_1})}{<}(1-C_2)^{t_0/2}\|\overline{\bm{W}}_0-\bm{W}^*\|=K_0+\kappa$, and hence $K_{t_0}<K_0+\kappa$. If the constraint of Lemma \ref{lemma::2} does not hold at some time $t_1$ before $t_0$, we have $K_{t_1}<K_0$. Both case indicate that $T_0\leq t_0$. Hence, we have $T_0\leq t_0=\mathcal{O}(\log\frac{1}{K_0+\kappa})$, which ends the proof of Theorem \ref{lemma::3}.
}
{
\subsection{Proof of Theorem 2}\label{proof_lemma::4}
From Theorem 1, we have proved that the gradient descent algorithm for binary models provides geometric convergence, which is the same with real-valued neural networks in the overparameterized setting with constant step size defined in \cite{Qu20}. The only difference is that the model can only converge to a set of suboptimal parameters in the binary case. Since we have the assumption that each parameter $\overline{w}$ is updated with an unbiased estimation, the expectation will not be changed during the parameter aggregating.

Hence, from Theorem 5 in \cite{Qu20}, we have
\begin{equation}\begin{aligned}
&\mathbb{E} F\left(\mathbf{W}_{m,T}\right) - F\left(\bm{W}^*\right) \leq\\ &\mathcal{O}\big(\beta \exp(-\frac{\mu}{E} \frac{M T}{\beta_0 \nu_{\max }+\beta\left(M-\nu_{\min }\right)})\\ &\cdot\left\|\mathbf{W}_{m,0}-\mathbf{W}^{*}\right\|^{2}\big),
\end{aligned}\end{equation}
when we choose a reasonable learning rate and Lemma 3 holds. Here,  $F\left(\mathbf{W}\right)\triangleq \frac{1}{|\mathcal{D}|}\sum_{i=1}^M |\mathcal{D}|F_i\left(\mathbf{W}\right)$ is the total loss function of FL, and $F_i\left(\mathbf{W}\right)=\frac{1}{|\mathcal{D}_i|}\sum_{j=1}^{|\mathcal{D}_i|}\ell(y_{ij},g(\bm{x}_ij,\bm{W}))$. $\ell(\cdot)$ is $\beta_0$-smooth and $F(\cdot)$ is $\beta$-smooth. $\nu_{\max}$ and $\nu_{\min}$ are the maximum and minimum value of $M\frac{|\mathcal{D}_i|}{|\mathcal{D}|}$, respectively. $E$ is the number of epochs conducted among each aggregating, hence $E=1$ here.

From the definition of $\xi$-strong convexity, we can easily know that
\begin{equation}
\begin{aligned}
&\frac{\xi}{2}\|\mathbf{W}_{m,T}-\bm{W}^*\|^2-\epsilon\|\mathbf{W}_{m,T}-\bm{W}^*\|\\
\leq &\frac{\xi}{2}\|\mathbf{W}_{m,T}-\bm{W}^*\|^2+\nabla F(\bm{W}^*)^T(\mathbf{W}_{m,T}-\bm{W}^*)\\
\leq &\|F\left(\mathbf{W}_{m,T}\right) - F\left(\bm{W}^*\right)\|\\
\leq &\mathcal{O}\left( \exp M T\right).
\end{aligned}
\end{equation}

Since $\epsilon$ is a small positive value, we have $\|\mathbf{W}_{m,T}-\bm{W}^*\|\leq\mathcal{O}\left( \exp M T\right)$ when Lemma 3 holds. Hence, with $t_0=\mathcal{O}(\frac{1}{M}\log\frac{1}{K_0+\kappa})$, either $\|\bm{W}_{m,t}^b-\bm{W}^*\|/2<K_0+\kappa$ or Lemma 3 does not hold at a time before $t_0$. Both cases indicate that Theorem 2 holds.
}
\end{appendices}

\normalem
\printbibliography
\end{document}